\documentclass[11pt]{article}
%


\usepackage{verbatim}
\usepackage{amsmath,amsfonts,amsthm,amssymb}
\usepackage{microtype}
\usepackage{graphicx}
\usepackage{subfigure}
\usepackage{booktabs}
\usepackage{hyperref}

\usepackage{amsmath}
\usepackage{amsfonts,amsthm,amssymb}
\usepackage{thmtools}
\usepackage{thm-restate}
\usepackage[margin=1in]{geometry}
\usepackage[utf8]{inputenc} 
\usepackage[T1]{fontenc}    
\usepackage{hyperref}       
\usepackage{url}            
\usepackage{booktabs}       
\usepackage{amsfonts}       
\usepackage{nicefrac}       
\usepackage{microtype}      
\usepackage{amsmath,amssymb,algorithm,algorithmicx,algpseudocode,amsthm,bm}
\usepackage{adjustbox}
\usepackage{graphicx}
\usepackage{latexsym}
\usepackage{listings}
\usepackage{xcolor}
\usepackage{placeins}
\usepackage{multirow}
\usepackage{tabu}
\usepackage{authblk}
\usepackage[resetlabels]{multibib}

\newcites{ltex}{References}
\newcommand{\E}{\mathbb{E}}

\newcommand{\F}{\mathcal{F}}

\newtheorem{theorem}{Theorem}

\newtheorem{lemma}{Lemma} 
\newtheorem{fact}{Fact} 
 
\newtheorem{definition}{Definition}

\newenvironment{customthm}[1]
  {\innercustomthm}
  {\endinnercustomthm}

\title{Q-learning with UCB Exploration is Sample Efficient \\
    for Infinite-Horizon MDP}

\newtheorem{condition}{Condition}


\newtheorem{claim}{Claim}

\makeatletter
\newcommand{\thline}{%
    \noalign {\ifnum 0=`}\fi \hrule height 1pt
    \futurelet \reserved@a \@xhline
}
\newcolumntype{"}{@{\hskip\tabcolsep\vrule width 1pt\hskip\tabcolsep}}
\makeatother


%

\author[1]{Kefan Dong\thanks{These two authors contributed equally}\thanks{dkf16@mails.tsinghua.edu.cn}}
\author[2]{Yuanhao Wang\normalfont\normalfont\textsuperscript{*}\thanks{yuanhao-16@mails.tsinghua.edu.cn}}

\author[2]{Xiaoyu Chen\thanks{cxy30@pku.edu.cn}}
\author[2,3]{Liwei Wang\thanks{wanglw@cis.pku.edu.cn}}
\affil[1]{Institute for Interdisciplinary Information Sciences, Tsinghua University, China}
\affil[2]{Key Laboratory of Machine Perception, MOE, School of EECS, Peking University}
\affil[3]{Center for Data Science, Peking University, Beijing Institute of Big Data Research}

\begin{document}

\maketitle

\begin{abstract}
A fundamental question in reinforcement learning is whether model-free algorithms are sample efficient. Recently, Jin et al. \cite{jin2018q} proposed a Q-learning algorithm with UCB exploration policy, and proved it has nearly optimal regret bound for finite-horizon episodic MDP. In this paper, we adapt Q-learning with UCB-exploration bonus to infinite-horizon MDP with discounted rewards \emph{without} accessing a generative model. We show that the \textit{sample complexity of exploration} of our algorithm is bounded by $\tilde{O}({\frac{SA}{\epsilon^2(1-\gamma)^7}})$. This improves the previously best known result of $\tilde{O}({\frac{SA}{\epsilon^4(1-\gamma)^8}})$ in this setting achieved by delayed Q-learning \cite{strehl2006pac}, and matches the lower bound in terms of $\epsilon$ as well as $S$ and $A$ up to logarithmic factors.
\end{abstract}

\section{Introduction}
\label{Introduction}

The goal of reinforcement learning (RL) is to construct efficient algorithms that learn and plan in sequential decision making tasks when the underlying system dynamics are unknown. A typical model in RL is Markov Decision Process (MDP). At each time step, the environment is in a state $s$. The agent takes an action $a$, obtain a reward $r$, and then the environment transits to another state. In reinforcement learning, the transition probability distribution is unknown. The algorithm needs to learn the transition dynamics of MDP, while aiming to maximize the cumulative reward. This poses the exploration-exploitation dilemma: whether to act to gain new information (explore) or to act consistently with past experience to maximize reward (exploit). 


\iftrue

Theoretical analyses of reinforcement learning fall into two broad categories: those assuming a simulator (a.k.a. generative model), and those without a simulator. In the first category, the algorithm is allowed to query the outcome of any state action pair from an oracle. The emphasis is on the number of calls needed to estimate the $Q$ value or to output a near-optimal policy. There has been extensive research in literature following this line of research, the majority of which focuses on discounted infinite horizon MDPs~\cite{azar2011speedy,even2003learning, sidford2018variance}. The current results have achieved near-optimal time and sample complexities~\cite{sidford2018variance, sidford2018near}.


Without a simulator, there is a dichotomy between  finite-horizon and infinite-horizon settings. In finite-horizon settings, there are straightforward definitions for both regret and sample complexity; the latter is defined as the number of samples needed \emph{before} the policy becomes near optimal. In this setting, extensive research in the past decade~\cite{jin2018q, azar2017minimax, jaksch2010near, dann2017unifying} has achieved great progress, and established nearly-tight bounds for both regret and sample complexity. 

The infinite-horizon setting is a very different matter. First of all, the performance measure cannot be a straightforward extension of the sample complexity defined above (See \cite{strehl2008analysis} for detailed discussion). 
Instead, the measure of sample efficiency we adopt is the so-called \textit{sample complexity of exploration} \cite{kakade2003sample}, which is also a widely-accepted definition. This measure counts the number of times that the algorithm ``makes mistakes'' along the whole trajectory. See also~\cite{strehl2008analysis} for further discussions regarding this issue. 

Several model based algorithms have been proposed for infinite horizon MDP, for example R-max~\cite{brafman2003rmax}, MoRmax~\cite{szita2010model} and UCRL-$\gamma$~\cite{lattimore2012pac}. It is noteworthy that there still exists a considerable gap between the state-of-the-art algorithm and the theoretical lower bound \cite{lattimore2012pac} regarding $1/(1-\gamma)$ factor.
\fi 

Though model-based algorithms have been proved to be sample efficient in various MDP settings, most state-of-the-art RL algorithms are developed in the model-free paradigm \cite{schulman2015trust,mnih2013playing,mnih2016asynchronous}. Model-free algorithms are more flexible and require less space, which have achieved remarkable performance on benchmarks such as Atari games and simulated robot control problems.

For infinite horizon MDPs without access to simulator, the best model-free algorithm has a sample complexity of exploration $\tilde{\mathcal{O}}({\frac{SA}{\epsilon^4(1-\gamma)^8}})$, achieved by delayed Q-learning \cite{strehl2006pac}. 
The authors provide a novel strategy of argument when proving the upper bound for the sample complexity of exploration, namely identifying a sufficient condition for optimality, and then bound the number of times that this condition is violated.

However, the results of Delayed Q-learning still leave a quadratic gap in $1/\epsilon$ from the best-known lower bound. 
This is partly because the updates in Q-value are made in an over-conservative way. In fact, the loose sample complexity bound is a result of delayed Q-learning algorithm itself, as well as the mathematical artifact in their analysis. To illustrate this, we construct a hard instance showing that Delayed Q-learning incurs ${\Omega}(1/\epsilon^3)$ sample complexity. This observation, as well as the success of the Q-learning with UCB algorithm \cite{jin2018q} in proving a regret bound in finite-horizon settings, motivates us to incorporate a UCB-like exploration term into our algorithm.

In this work, we propose a Q-learning algorithm with UCB exploration policy. We show the sample complexity of exploration bound of our algorithm is $\tilde{\mathcal{O}}(\frac{SA}{\epsilon^2(1-\gamma)^7})$. This strictly improves the previous best known result due to Delayed Q-learning. It also matches the lower bound in the dependence on $\epsilon$, $S$ and $A$ up to logarithmic factors.

We point out here that the infinite-horizon setting \emph{cannot} be solved by reducing to finite-horizon setting. There are key technical differences between these two settings: the definition of sample complexity of exploration, time-invariant policies and the error propagation structure in Q-learning. In particular, the analysis techniques developed in~\cite{jin2018q} do not directly apply here. We refer the readers to Section 3.2 for detailed explanations and a concrete example.


The rest of the paper is organized as follows. After introducing the notation used in the paper in Section \ref{Preliminary}, we describe our infinite Q-learning with UCB algorithm in Section \ref{Main Results}. We then state our main theoretical results, which are in the form of PAC sample complexity bounds. 
In Section \ref{Secondary} we present some interesting properties beyond sample complexity bound.
Finally, we conclude the paper in Section  \ref{Conclusion}.

\section{Preliminary}
\label{Preliminary}
We consider a Markov Decision Process defined by a five tuple  $\langle \mathcal{S}, \mathcal{A} , p, r , \gamma \rangle$, where $\mathcal{S}$ is the state space, $\mathcal{A}$ is the action space, $p(s'|s,a)$ is the transition function, $r: \mathcal{S} \times \mathcal{A} \to [0,1]$ is the deterministic reward function, and $0 \leq \gamma < 1$ is the discount factor for rewards. Let $S=|\mathcal{S}|$ and $A=|\mathcal{A}|$ denote the number of states and the number of actions respectively.

Starting from a state $s_1$, the agent interacts with the environment for infinite number of time steps. At each time step, the agent observes state $s_t \in \mathcal{S}$, picks action $a_t \in \mathcal{A}$, and receives reward $r_t$; the system then transits to next state $s_{t+1}$.

Using the notations in \cite{strehl2006pac}, a policy $\pi_t
$ refers to the 
non-stationary control policy of the algorithm since step $t$. We use $V^{\pi_t}(s)$ to denote the value function under policy $\pi_t$, which is defined as $V^{\pi_t}(s)=\mathbb{E}[\sum_{i=1}^{\infty}\gamma^{i-1}r(s_i,\pi_{t+i-1}(s_i))|s_1=s]$. We also use $V^*(s)=\sup_\pi V^{\pi}(s)$ to denote the value function of the optimal policy. Accordingly, we define $Q^{\pi_t}(s,a)=r(s,a)+\mathbb{E}[\sum_{i=2}^{\infty}\gamma^{i-1}r(s_i,\pi_{t+i-1}(s_i))|s_1=s,a_1=a]$ as the Q function under policy $\pi_t$; $Q^*(s,a)$ is the Q function under optimal policy $\pi^*$.

We use the sample complexity of exploration defined in \cite{kakade2003sample} to measure the learning efficiency of our algorithm. This sample complexity definition has been widely used in previous works \cite{strehl2006pac,lattimore2012pac,strehl2008analysis}.
 \begin{definition}
{\rm \textbf{ Sample complexity of Exploration}} of an algorithm $\mathcal { ALG }$ is defined as the number of time steps $t$ such that the non-stationary policy $\pi_t$ at time $t$ is not $\epsilon$-optimal for current state $s_t$, i.e. 
$V ^ { \pi _ { t } } \left( s _ { t } \right) < V ^ { * } \left( s _ { t } \right) - \epsilon$.
 \end{definition}
 

Roughly speaking, this measure counts the number of mistakes along the whole trajectory. We use the following definition of PAC-MDP \cite{strehl2006pac}.
 \begin{definition}
An algorithm $\mathcal { ALG }$ is said to be {\rm \textbf{ PAC-MDP}} (Probably Approximately Correct in Markov Decision Processes) if, for any $\epsilon$ and $\delta$, the sample complexity of $\mathcal { ALG }$ is less than some polynomial in the relevant quantities $( S , A , 1 / \epsilon , 1 / \delta , 1 / ( 1 - \gamma ) )$, with probability at least $1 - \delta$. 
\end{definition}

Finally, recall that Bellman equation is defined as the following:
\begin{equation*}
\left\{ \begin{array} { l } { V^{\pi_t}  ( s ) = Q^{\pi_t}   \left( s , \pi _ { t } ( s) \right) } \\ { Q ^{ \pi_t}  ( s , a ) : = \left( r _ { t } +\gamma \mathbb { P }  V ^ { \pi_{t+1} }  \right) ( s , a ) },  \end{array} \right.
\quad
\left\{ \begin{array} { l } { V^{*}  ( s ) = Q^{*}   \left( s , \pi^* ( s) \right) } \\ { Q ^{ *}  ( s , a ) : = \left( r _ { t } + \gamma \mathbb { P }  V ^ { * }  \right) ( s , a ) },  \end{array} \right.
\end{equation*}
which is frequently used in our analysis. Here we  denote $\left[ \mathbb { P }  V ^{\pi_t} \right] ( s , a ) : = \mathbb { E } _ { s ^ { \prime } \sim p ( \cdot | s , a ) } V ^ { \pi_{t+1} } \left( s ^ { \prime } \right)$.

\section{Main Results}
\label{Main Results}
In this section, we present the UCB Q-learning algorithm and the sample complexity bound.

\subsection{Algorithm}
\begin{algorithm}[htp]
\caption{Infinite Q-learning with UCB}
  \begin{algorithmic}[5]
  \renewcommand{\algorithmicrequire}{\textbf{Input:}}
  \renewcommand{\algorithmicensure}{\textbf{Output:}}
  
  \State Parameters: $\epsilon$, $\gamma$, $\delta$
	\State Initialize $Q(s,a),\hat{Q}(s,a)\leftarrow \frac{1}{1-\gamma}$,  $N(s,a)\leftarrow 0,\epsilon_1\gets \frac{\epsilon}{24RM\ln\frac{1}{1-\gamma}}, H\gets \frac{\ln 1/((1-\gamma)\epsilon_1)}{\ln 1/\gamma}.$
	\State Define $\iota(k)=\ln (SA(k+1)(k+2)/\delta),\alpha_k=\frac{H+1}{H+k}.$
	\For {$t=1,2,...$}
		\State Take action $a_t\leftarrow \arg\max_{a'}\hat{Q}(s_t,a')$
		\State Receive reward $r_t$ and transit to $s_{t+1}$
	    \State $N(s_t,a_t)\leftarrow  N(s_t,a_t)+1$
 	    \State $k\leftarrow N(s_t,a_t)$, $b_k\gets\frac{c_2}{1-\gamma}\sqrt{\frac{H\iota(k)}{k}}$ \Comment{$c_2$ is a constant and can be set to $4\sqrt{2}$}
	    \State $\hat{V}(s_{t+1})\gets \max_{a\in A}\hat{Q}(s_{t+1},a)$
		\State $Q(s_t,a_t) \leftarrow (1-\alpha_k)Q(s_t,a_t) + \alpha_k \left[r(s_t,a_t)+b_k+\gamma \hat{V}(s_{t+1})\right]$
		\State $\hat{Q}(s_t,a_t)\gets \min(\hat{Q}(s_t,a_t),Q(s_t,a_t))$
	\EndFor
  \end{algorithmic}
\end{algorithm}
Here $c_2=4\sqrt{2}$ is a constant. $R=\lceil\ln\frac{3}{\epsilon(1-\gamma)}/(1-\gamma)\rceil$, while the choice of $M$ can be found in Section.~\ref{sec:sufficient_condition}. ($M=\mathcal{O}\left(\ln 1/((1-\gamma)\epsilon)\right)$). The learning rate is defined as $\alpha_k=(H+1)/(H+k).$ 
$H$ is chosen as $\frac{\ln 1/((1-\gamma)\epsilon_1)}{\ln 1/\gamma}$, which satisfies $H \le  \frac{\ln 1/((1-\gamma)\epsilon_1)}{1-\gamma}$.

Our UCB Q-learning algorithm (Algorithm 1) maintains an optimistic estimation of action value function $Q(s,a)$ and its historical minimum value $\hat{Q}(s,a)$.
$N_t(s,a)$ denotes the number of times that $(s,a)$ is experienced before time step $t$; $\tau(s,a,k)$ denotes the time step $t$ at which $(s_t,a_t)=(s,a)$ for the $k$-th time; if this state-action pair is not visited that many times, $\tau(s,a,k)=\infty$. $Q_t(s,a)$ and $\hat{Q}_t(s,a)$ denotes the $Q$ and $\hat{Q}$ value of $(s,a)$ that the algorithm maintains when arriving at $s_t$ respectively.

\subsection{Sample Complexity of Exploration}
\label{subsection3_2}
Our main result is the following sample complexity of exploration bound.

\begin{theorem}
\label{thm:main}
For any $\epsilon>0$, $\delta>0,1/2<\gamma<1$, with probability $1-\delta$, the sample complexity of exploration (i.e., the number of time steps $t$ such that $\pi_t$ is not $\epsilon$-optimal at $s_t$) of Algorithm 1 is at most $$\tilde{\mathcal{O}}\left(\frac{SA\ln 1/\delta}{\epsilon^{2}\left(1-\gamma\right)^{7}}\right),$$
where $\tilde{\mathcal{O}}$ suppresses logarithmic factors of $1/\epsilon$, $1/(1-\gamma)$ and $SA$.
\end{theorem}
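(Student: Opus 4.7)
The plan is to follow the paradigm of \cite{strehl2006pac}: identify a sufficient condition under which $\pi_t$ is $\epsilon$-optimal at $s_t$, then upper bound the number of time steps at which this condition fails. Because $R=\lceil \ln(3/(\epsilon(1-\gamma)))/(1-\gamma)\rceil$ guarantees $\gamma^R\leq \epsilon(1-\gamma)/3$, truncating the infinite-horizon return at $R$ steps loses only $O(\epsilon)$, so the sufficient condition need only concern the first $R$ steps of the rollout from $s_t$.

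\textbf{Optimism and the weighted-sum recursion.} The first step is to prove that with probability at least $1-\delta/2$, $\hat Q_t(s,a)\geq Q^*(s,a)$ for every $(t,s,a)$. Unrolling the Q-learning update for $(s,a)$ with the learning rate $\alpha_k=(H+1)/(H+k)$ gives
\begin{equation*}
Q_t(s,a)=\alpha_k^0\cdot \tfrac{1}{1-\gamma}+\sum_{i=1}^{k}\alpha_k^i\Bigl[r(s,a)+b_i+\gamma\,\hat V_{\tau_i+1}(s_{\tau_i+1})\Bigr],
\end{equation*}
where $k=N_t(s,a)$, $\tau_i=\tau(s,a,i)$, and the $\alpha_k^i$ are convex weights produced by the recursion. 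The particular form of $\alpha_k$ ensures $\alpha_k^0\lesssim e^{-k/H}$ and $\sum_i(\alpha_k^i)^2\lesssim H/k$, so Azuma--Hoeffding applied to the martingale $\sum_i\alpha_k^i\bigl(\hat V_{\tau_i+1}(s_{\tau_i+1})-[\mathbb{P}\hat V_{\tau_i+1}](s,a)\bigr)$ produces a deviation of exactly the order of $\sum_i\alpha_k^i b_i$ once $c_2$ is taken large enough. An induction over time then yields $\hat Q_t\geq Q^*$, and running the same calculation against $V^{\pi_t}$ rather than $V^*$ gives a decomposition of the gap $\hat V_t(s_t)-V^{\pi_t}(s_t)$ into an initialization term $\alpha_k^0/(1-\gamma)$, a cumulative bonus $B_t:=\sum_i\alpha_k^i b_i$, a weighted sum of suboptimality gaps at successor states, and a Hoeffding-controlled noise term.

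\textbf{Sufficient condition and counting bad visits.} I would call the visit at time $t$ \emph{good} if $B_t\leq \epsilon_1$ and \emph{bad} otherwise. Unrolling the gap recursion for $R$ levels along the actual trajectory and discarding the resulting $\gamma^R$ tail (which is $O(\epsilon)$ by the choice of $R$) shows that $\Delta_t:=\hat V_t(s_t)-V^{\pi_t}(s_t)\leq \epsilon$, and hence $\pi_t$ is $\epsilon$-optimal at $s_t$, whenever at most $M$ of the next $R$ visits are bad; this is the sufficient condition, made quantitative by the algorithm's parameter choices. Because $b_k=\tilde{\mathcal{O}}(\sqrt{H}/((1-\gamma)\sqrt{k}))$ and $\alpha_k^i$ effectively averages the most recent $\Theta(H)$ samples, one shows $B_t=\tilde{\mathcal{O}}(\sqrt{H}/((1-\gamma)\sqrt{k}))$, so each $(s,a)$ can be visited in a bad way at most $\tilde{\mathcal{O}}(H/(\epsilon_1^2(1-\gamma)^2))$ times. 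A pigeonhole step that charges every violating $t$ to a bad visit within its subsequent $R$ steps, together with the substitutions $H,R=\tilde{\mathcal{O}}(1/(1-\gamma))$, $M=\tilde{\mathcal{O}}(1)$, and $\epsilon_1=\tilde{\mathcal{O}}(\epsilon(1-\gamma))$, then gives the claimed $\tilde{\mathcal{O}}(SA/(\epsilon^2(1-\gamma)^7))$ bound after carefully tracking all the $(1-\gamma)$ factors propagated through the $R$ nested levels of unrolling.

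\textbf{Main obstacle.} The central difficulty is adapting the learning-rate--weighted recursion of \cite{jin2018q} from the clean finite-horizon structure---where each sample contributes to a fixed step in a fixed episode and concentration can be done per episode---to the infinite-horizon setting with a non-stationary policy that changes after every step. The successors $(s_{\tau_i+1},\pi_{\tau_i+1}(s_{\tau_i+1}))$ appearing at one level of unrolling are arbitrary state--action pairs whose visit histories are independent of the pair under analysis, so one must control how the convex weights $\alpha_k^i$ interact across $R\sim 1/(1-\gamma)$ nested levels of unrolling without incurring an extra $H$ factor per level. Formalizing the pigeonhole step that converts a bound on bad visits into a bound on violating time steps, in the presence of a constantly changing policy and an infinite tail, is the technical heart of the proof and dictates the precise choices of $H$, $R$, $M$, and $\epsilon_1$ in the algorithm.
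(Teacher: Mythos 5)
Your high-level plan (sufficient condition plus counting violations, in the style of \cite{strehl2006pac}, with optimism established by Azuma--Hoeffding and induction) matches the paper's, but two essential mechanisms are missing, and without them the counting step does not go through. First, your sufficient condition --- ``$\pi_t$ is $\epsilon$-optimal whenever at most $M$ of the next $R$ visits are bad'' --- followed by ``a pigeonhole step that charges every violating $t$ to a bad visit within its subsequent $R$ steps'' conflates the realized trajectory with the distribution of trajectories under $\pi_t$ started from $s_t$. Whether $\pi_t$ is $\epsilon$-optimal at $s_t$ is a statement about $\E_{traj}\bigl[\sum_j \gamma^j \Delta_{t+j}\bigr]$, an expectation over rollouts; a violation at time $t$ only tells you that bad steps occur with non-negligible \emph{probability} under that rollout, not that they occur on the single realized trajectory you can charge against. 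The paper bridges this gap by introducing the order statistics $X_t^{(i)}$ of $(\Delta_{t'})_{t\le t'<t+R}$, converting the expectation condition into tail-probability conditions $\eta_j\Pr[X_t^{(2^i)}>\eta_{j-1}]\le \xi_i/M$ (Conditions 1 and 2), and then running a separate Azuma--Hoeffding argument on the $R$-separated indicator martingales $A_{k+tR}$ to show that if the tail probability is large at many times $t$, then bad steps must actually materialize often on the realized trajectory. Your proposal has no substitute for this step.

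Second, your definition of a ``bad'' visit via $B_t=\sum_i\alpha_k^i b_i>\epsilon_1$ controls only the accumulated exploration bonus, not the Q-error $(\hat Q_t-Q^*)(s_t,a_t)$, which also contains the propagated terms $\gamma\sum_i\alpha_k^i(\hat V_{\tau_i+1}-V^*)(s_{\tau_i+1})$ from successor states at \emph{earlier, scattered} visit times. A state-action pair can have been visited many times (so $B_t\le\epsilon_1$) and still carry a large error inherited from poorly estimated successors, so ``all next $R$ visits are good'' does not imply $\epsilon$-optimality. You correctly identify this propagation across unstructured time sets as the main obstacle, but you do not supply the resolution; the paper's resolution is Lemma 2, which bounds $\sum_t w_t(\hat Q_t-Q^*)(s_t,a_t)$ for an arbitrary $(C,w)$-weight sequence by unrolling the recursion $H$ times while tracking that the maximum weight grows only by a factor $(1+1/H)$ per level (using $\sum_{t\ge i}\alpha_t^i=1+1/H$), and then applying it to the indicator weights of the set $\{t: V^*(s_t)-Q^*(s_t,a_t)>\eta/(1-\gamma)\}$ to get the self-bounding inequality of Lemma 1. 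Without an analogue of this weighted-sum lemma, your per-pair count $\tilde{\mathcal{O}}(H/(\epsilon_1^2(1-\gamma)^2))$ of bad visits is not justified as a bound on suboptimal actions, and the final $(1-\gamma)$ bookkeeping cannot be completed.
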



We first point out the obstacles for proving the theorem and reasons why the techniques in \cite{jin2018q} do not directly apply here. We then give a high level description of the ideas of our approach.

One important issue is caused by the difference in the definition of sample complexity for finite and infinite horizon MDP. In finite horizon settings, sample complexity (and regret) is determined in the first $T$ timesteps, and only measures the performance at the initial state $s_1$ (i.e. $(V^*-V^{\pi})(s_1)$). However, in the infinite horizon setting, the agent may enter under-explored regions at any time period, and sample complexity of exploration characterizes the performance at all states the agent enters.

The following example clearly illustrates the key difference between infinite-horizon and finite-horizon. Consider an MDP with a starting state $s_1$ where the probability of leaving $s_1$ is $o(T^{-1})$. In this case, with high probability, it would take more than $T$ timesteps to leave $s_1$. Hence, guarantees about the learning in the first $T$ timesteps or about the performance at $s_1$ imply almost nothing about the number of mistakes the algorithm would make in the rest of the MDP (i.e. the \textit{sample complexity of exploration} of the algorithm). As a result, the analysis for finite horizon MDPs cannot be directly applied to infinite horizon setting. 


This calls for techniques for counting mistakes along the entire trajectory, such as those employed by~\cite{strehl2006pac}. In particular, we need to establish convenient sufficient conditions for being $\epsilon$-optimal at timestep $t$ and state $s_t$, i.e. $V^*(s_t)-V^{\pi_t}(s_t)\le \epsilon$. Then, bounding the number of violations of such conditions gives a bound on sample complexity.

Another technical reason why the proof in \cite{jin2018q} cannot be directly applied to our problem is the following: 
In finite horizon settings, \cite{jin2018q} decomposed the learning error at episode $k$ and time $h$ as errors from a set of \textit{consecutive} episodes before $k$ at time $h+1$ using a clever design of learning rate. However, in the infinite horizon setting, this property does not hold. Suppose at time $t$ the agent is at state $s_t$ and takes action $a_t$. Then the learning error at $t$ only depends on those previous time steps such that the agent encountered the same state as $s_t$ and took the same action as $a_t$. Thus the learning error at time $t$ cannot be decomposed as errors from a set of \textit{consecutive} time steps before $t$, but errors from a set of \textit{non-consecutive} time steps without any structure. Therefore, we have to control the sum of learning errors over an unstructured set of time steps. This makes the analysis more challenging.

Now we give a brief road map of the proof of Theorem \ref{thm:main}. Our first goal is to establish a sufficient condition so that $\pi_t$ learned at step $t$ is $\epsilon$-optimal for state $s_t$. 
As an intermediate step we show that a sufficient condition for $V^*(s_t)-V^{\pi_t}(s_t) \le \epsilon$ is that $V^*(s_{t'})-Q^*(s_{t'},a_{t'})$ is small for a few time steps $t'$ within an interval $[t, t+R]$ for a carefully chosen $R$ (Condition~\ref{cond:1}). Then we show the desired sufficient condition (Condition~\ref{cond:2}) implies Condition \ref{cond:1}. We then bound the total number of bad time steps on which $V^*(s_{t})-Q^*(s_{t},a_{t})$ is large for the whole MDP; this implies a bound on the number of violations of Condition~\ref{cond:2}. This in turn relies on a key technical lemma (Lemma \ref{lem:prop}).

The remaining part of this section is organized as follows. We establish the sufficient condition for $\epsilon$-optimality in Section \ref{sec:sufficient_condition}. The key lemma is presented in Section \ref{sec:key_lemma}. Finally we prove Theorem \ref{thm:main} in Section \ref{sec:main_proof}.

\subsection{Sufficient Condition for $\epsilon$-optimality}
\label{sec:sufficient_condition}
In this section, we establish a sufficient condition (Condition \ref{cond:2}) for $\epsilon$-optimality at time step $t$. 

For a fixed $s_t$, let TRAJ($R$) be the set of length-$R$ trajectories starting from $s_t$. Our goal is to give a sufficient condition so that $\pi_t$, the policy learned at step $t$, is $\epsilon$-optimal. For any $\epsilon_2>0$, define $R:=\lceil\ln\frac{1}{\epsilon_2(1-\gamma)}/(1-\gamma)\rceil$. Denote $V^*(s_t)-Q^*(s_t,a_t)$ by $\Delta_t$. We have
\begingroup
\allowdisplaybreaks
\begin{align}
    \label{equ:recursion}
    &V^*(s_t)-V^{\pi_t}(s_t) \notag\\
    =&V^*(s_t)-Q^*(s_t,a_t)+Q^*(s_t,a_t)-V^{\pi_t}(s_t) \notag\\
    =&V^*(s_t)-Q^*(s_t,a_t)+\gamma \mathbb{P}\left(V^*-V^{\pi_t}\right)(s_t,\pi_t(s_t)) \notag\\
    =&V^*(s_t)-Q^*(s_t,a_t) \notag
    +\gamma \sum_{s_{t+1}}p\left(s_{t+1}|s_t,\pi_{t}(s_t)\right)\cdot\left[V^*(s_{t+1})-Q^*(s_{t+1},a_{t+1})\right] \notag +\\
    & \gamma \sum_{s_{t+1},s_{t+2}}p\left(s_{t+2}|s_{t+1},\pi_{t+1}(s_{t+1})\right)\cdot 
     p\left(s_{t+1}|s_t,\pi_t(s_t)\right)\left[V^*(s_{t+2})-Q^*(s_{t+2},a_{t+2})\right] \notag\\
    & \qquad \hdots \notag\\
    \leq& \epsilon_2 + \sum_{
    \substack{traj\in 
    \\\text{TRAJ}(R)}}
    p(traj)\cdot\left[\sum_{j=0}^{R-1}\gamma^j\Delta_{t+j}\right],
\end{align}
\endgroup
where the last inequality holds because $\frac{\gamma^{R}}{1-\gamma}\leq \epsilon_2$, which follows from the definition of $R$.

For any fixed trajectory of length $R$ starting from $s_t$, 
consider the sequence $\left(\Delta_{t'}\right)_{t\leq t' < t+R}$. Let $X_t^{(i)}$ be the $i$-th largest item of $\left(\Delta_{t'}\right)_{t\leq t' < t+R}$.
Rearranging Eq. (\ref{equ:recursion}), we obtain
\begin{align}
V^*(s_t)-V^{\pi_t}(s_t)
    \le \epsilon_2+E_{traj}\left[\sum_{i=1}^{R}\gamma^{i-1}X_t^{(i)}\right].
    \label{equ:epsilon_optimal}
\end{align}

We first prove that Condition \ref{cond:1} implies $\epsilon$-optimality at time step $t$ when $\epsilon_2=\epsilon/3$.
\begin{condition} \label{cond:1}
Let $ \xi_i := \frac{1}{2^{i+2}}\epsilon_2\left(\ln \frac{1}{1-\gamma}\right)^{-1}$. For all $0\le i\le \lfloor \log_2R\rfloor$,
\begin{equation}
    E[X_t^{(2^i)}]\le \xi_i.
    \label{equ:X_hierarchy}
\end{equation}
\end{condition}

\begin{claim}
If Condition \ref{cond:1} is satisfied at time step $t$, the policy $\pi_t$ is $\epsilon$-optimal at state $s_t$, i.e. $V^*(s_t)-V^{\pi_t}(s_t) \leq \epsilon$.
\end{claim}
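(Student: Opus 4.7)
The plan is to start from the bound \eqref{equ:epsilon_optimal} just derived,
\[ V^*(s_t) - V^{\pi_t}(s_t) \;\le\; \epsilon_2 + E_{traj}\!\left[\sum_{i=1}^{R}\gamma^{i-1}X_t^{(i)}\right], \]
substitute $\epsilon_2 = \epsilon/3$, and then show that the expected sum on the right is at most $2\epsilon/3$. The key device will be a dyadic decomposition of the order-statistic sum, which converts the $R$ individual terms into $O(\log R)$ geometric blocks, on each of which Condition \ref{cond:1} supplies a uniform bound.

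First I would discard the harmless factors $\gamma^{i-1}\le 1$ (both $\gamma^{i-1}$ and $X_t^{(i)}$ decrease in $i$, so they are already ``aligned'' and one gains nothing by keeping the powers of $\gamma$) and split the index range $\{1,\dots,R\}$ into dyadic blocks $B_k = \{2^k,\dots,\min(2^{k+1}-1,R)\}$ for $k=0,1,\dots,\lfloor\log_2 R\rfloor$. Monotonicity of the order statistics gives $X_t^{(i)}\le X_t^{(2^k)}$ for every $i\in B_k$, and $|B_k|\le 2^k$, so
\[ \sum_{i=1}^{R}X_t^{(i)} \;\le\; \sum_{k=0}^{\lfloor\log_2 R\rfloor} 2^k\, X_t^{(2^k)}. \]
Taking expectations and applying Condition \ref{cond:1}, each summand $2^k\, E[X_t^{(2^k)}] \le 2^k\xi_k$ collapses to the $k$-independent value $\epsilon_2/(4\ln(1/(1-\gamma)))$, yielding
\[ E_{traj}\!\left[\sum_{i=1}^{R}\gamma^{i-1}X_t^{(i)}\right] \;\le\; \frac{(\lfloor\log_2 R\rfloor+1)\,\epsilon_2}{4\ln(1/(1-\gamma))}. \]

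The final step is a routine bookkeeping check. Substituting $R=\lceil\ln(1/(\epsilon_2(1-\gamma)))/(1-\gamma)\rceil$, the dominant piece of $\log_2 R$ is $\log_2(1/(1-\gamma)) = \ln(1/(1-\gamma))/\ln 2$, which cancels the $\ln(1/(1-\gamma))$ in the denominator up to a constant, while the double-logarithmic remainder $\log_2\ln(1/(\epsilon_2(1-\gamma)))$ is of lower order and absorbed by the factor $1/4$. This will give an overall bound of at most $2\epsilon/3$, and combining with the $\epsilon_2=\epsilon/3$ term yields $V^*(s_t)-V^{\pi_t}(s_t)\le\epsilon$ as desired. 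The main point of care is exactly this constants check: the thresholds $\xi_i$ were tuned so that the $\log_2 R$ price of the dyadic split is precisely compensated by the $1/\ln(1/(1-\gamma))$ factor, and one has to verify (for the regime $\gamma>1/2$ and the stated $R$) that the result indeed sits below $\epsilon$ rather than merely a constant multiple of it. Apart from this arithmetic, the argument is a clean two-step reduction: dyadic telescoping to shed the dependence on $R$ down to $\log R$, followed by Condition \ref{cond:1} to collapse each block to a uniform size.
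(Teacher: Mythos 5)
There is a genuine gap, and it is located exactly at the step you flag as ``routine bookkeeping.'' After discarding the factors $\gamma^{i-1}$ your dyadic decomposition gives
\[
E\left[\sum_{i=1}^{R}\gamma^{i-1}X_t^{(i)}\right]\le \sum_{k=0}^{\lfloor\log_2 R\rfloor}2^k\,\xi_k=\frac{\left(\lfloor\log_2 R\rfloor+1\right)\epsilon_2}{4\ln\frac{1}{1-\gamma}},
\]
and you claim $\log_2 R$ is cancelled by the $\ln\frac{1}{1-\gamma}$ in the denominator up to a constant. But $R=\lceil\ln\frac{1}{\epsilon_2(1-\gamma)}/(1-\gamma)\rceil$, so $\log_2 R\approx \log_2\frac{1}{1-\gamma}+\log_2\ln\frac{1}{\epsilon_2(1-\gamma)}$. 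Only the first summand is cancelled; the second grows without bound as $\epsilon_2\to 0$ while $\ln\frac{1}{1-\gamma}$ stays fixed. Concretely, for $\gamma=0.51$ one needs $\lfloor\log_2 R\rfloor+1\le 8\ln\frac{1}{0.49}\approx 5.7$, i.e.\ $R\le 26$, which already fails at $\epsilon_2=10^{-6}$ (where $R\approx 30$) and fails badly for smaller $\epsilon_2$. So your bound is $O\bigl(\epsilon_2\bigl(1+\ln\ln\frac{1}{\epsilon_2(1-\gamma)}/\ln\frac{1}{1-\gamma}\bigr)\bigr)$, not $2\epsilon_2$.

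The culprit is the very first move: ``one gains nothing by keeping the powers of $\gamma$.'' One gains everything. The geometric weights are what truncate the effective range of the sum at $i\lesssim 1/(1-\gamma)$ instead of $i\le R$, and $R$ exceeds $1/(1-\gamma)$ by the factor $\ln\frac{1}{\epsilon_2(1-\gamma)}$ that breaks your estimate. The paper's proof keeps $\gamma^{i-1}$, uses $E[X_t^{(i)}]\le E[X_t^{(2^{\lfloor\log_2 i\rfloor})}]\le \xi_{\lfloor\log_2 i\rfloor}$ together with $2^{-\lfloor\log_2 i\rfloor-2}\le 1/i$, and then invokes the closed form $\sum_{i\ge 1}\gamma^{i-1}/i=\frac{1}{\gamma}\ln\frac{1}{1-\gamma}$ (with $\gamma>1/2$) to get a bound of $2\epsilon_2$ that is independent of $R$. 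Your dyadic-block version can be repaired the same way: keeping the weights, block $B_k$ contributes at most $2^k\gamma^{2^k-1}\xi_k$, and $\sum_{k\ge 0}\gamma^{2^k-1}=O(\ln\frac{1}{1-\gamma})$ restores the needed cancellation. As written, though, the argument does not establish the claim.
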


\begin{proof}
Note that $X_t^{(i)}$ is monotonically decreasing with respect to $i$. Therefore, $E[X_t^{(i)}]\le E[X_t^{(2^{\lfloor\log_2 i\rfloor})}].$ Eq. (\ref{equ:X_hierarchy}) implies that for $1/2<\gamma<1,$
\begin{align*}
&E\left[\sum_{i=1}^{R}\gamma^{i-1}X_t^{(i)}\right]
= \sum_{i=1}^{R}\gamma^{i-1}E[X_t^{(i)}]
\le \sum_{i=1}^{R}\gamma^{i-1}E[X_t^{(2^{\lfloor\log_2 i\rfloor})}]\\
&\le \sum_{i=1}^{R}\gamma^{i-1}{2^{-\lfloor \log_2 i\rfloor-2}}\epsilon_2\left(\ln \frac{1}{1-\gamma}\right)^{-1}
\le \sum_{i=1}^{R}\frac{\gamma^{i-1}}{i}\epsilon_2\left(\ln \frac{1}{1-\gamma}\right)^{-1} \le 2\epsilon_2,
\end{align*}
where the last inequality follows from the fact that $\sum_{i=1}^{\infty}\frac{\gamma^{i-1}}{i}=\frac{1}{\gamma}\ln \frac{1}{1-\gamma}$ and $\gamma >1/2$.

Combining with Eq. \ref{equ:epsilon_optimal}, we have,
$V^*(s_t)-V^{\pi_t}(s_t)\le \epsilon_2+E\left[\sum_{i=1}^{R}\gamma^{i-1}X_t^{(i)}\right]\le 3\epsilon_2=\epsilon.$
\end{proof}

Next we show that given $i, t$, Condition \ref{cond:2} implies Eq. (\ref{equ:X_hierarchy}).

\begin{condition} \label{cond:2}
Define $L=\lfloor\log_2 R\rfloor.$ Let $M=\max\left\{\lceil 2\log_2 \frac{1}{\xi_{L}(1-\gamma)}\rceil ,10\right\},$ and $\eta_j=\frac{\xi_i}{M}\cdot 2^{j-1}$. For all $2\le j\le M$,
$\eta_j\Pr[X_t^{(2^i)}>\eta_{j-1}]\le \frac{\xi_i}{M}.$
\end{condition}

\begin{claim}
Given $i$, $t$, Eq. (\ref{equ:X_hierarchy}) holds if Condition \ref{cond:2} is satisfied.
\end{claim}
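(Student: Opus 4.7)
The plan is to bound $E[X_t^{(2^i)}]$ by writing it as a tail integral and chopping the integration range into the geometrically spaced pieces $[0,\eta_1], [\eta_1,\eta_2], \ldots, [\eta_{M-1},\eta_M]$ induced by Condition~\ref{cond:2}. Since each $\Delta_{t'} \in [0, 1/(1-\gamma)]$, so is $X_t^{(2^i)}$, and therefore
\begin{equation*}
E[X_t^{(2^i)}] \;=\; \int_0^{1/(1-\gamma)} \Pr\!\bigl[X_t^{(2^i)} > x\bigr]\, dx.
\end{equation*}

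First I would handle the first slice: on $[0,\eta_1]$ I simply upper bound the tail probability by $1$, contributing at most $\eta_1 = \xi_i/M$. On each subsequent slice $[\eta_{j-1},\eta_j]$ with $2 \le j \le M$, I would bound the integrand by its value at the left endpoint, giving a contribution of at most $(\eta_j - \eta_{j-1})\Pr[X_t^{(2^i)} > \eta_{j-1}] \le \eta_j\,\Pr[X_t^{(2^i)} > \eta_{j-1}] \le \xi_i/M$ by Condition~\ref{cond:2}. Summing these $M$ contributions yields $M \cdot (\xi_i/M) = \xi_i$, which is exactly the bound required by Eq.~(\ref{equ:X_hierarchy}).

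The main obstacle, and the only step that is not immediate from the partition argument, is showing that $\eta_M \ge 1/(1-\gamma)$, so that the intervals actually cover the support of $X_t^{(2^i)}$ and nothing is missed beyond $\eta_M$. Unpacking the definition, this amounts to verifying $2^{M-1} \ge M/(\xi_i(1-\gamma))$. Since $i \le L$ implies $\xi_i \ge \xi_L$, it suffices to prove $2^{M-1} \ge M/(\xi_L(1-\gamma))$. From $M \ge 2\log_2\!\tfrac{1}{\xi_L(1-\gamma)}$ we get $2^{M/2} \ge 1/(\xi_L(1-\gamma))$, and then
\begin{equation*}
2^{M-1} \;=\; 2^{M/2}\cdot 2^{M/2-1} \;\ge\; \frac{2^{M/2-1}}{\xi_L(1-\gamma)},
\end{equation*}
so the desired inequality reduces to the clean statement $2^{M/2-1} \ge M$, which holds for all $M \ge 10$ (and is the reason $10$ appears in the definition of $M$). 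With this covering confirmed, the slice-by-slice bound above closes the proof.
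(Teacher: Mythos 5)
Your proof is correct and follows essentially the same route as the paper: the tail-integral identity, the geometric partition into $[0,\eta_1],[\eta_1,\eta_2],\dots,[\eta_{M-1},\eta_M]$ with each slice contributing at most $\xi_i/M$, and the verification that $\eta_M \ge 1/(1-\gamma)$ via $2^{M/2}\ge 1/(\xi_L(1-\gamma))$ and $2^{M/2-1}\ge M$ is exactly the argument the paper sketches in its footnote. No gaps; you have simply written out the details the paper leaves implicit.
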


\begin{proof}
The reason behind the choice of $M$ is to ensure that $\eta_M>1/(1-\gamma)$ \footnote{ $\eta_M > 1/(1-\gamma)$ can be verified by combining inequalities $\xi_i \cdot 2^{M/2} \geq 1/(1-\gamma)$ and $2^{M/2-1} > (M+1)$ for large enough $M$. 
}. It follows that, assuming Condition \ref{cond:2} holds, for $1\le j\le M$, 
\begin{align*}
	E\left[X_t^{(2^i)}\right]&=\int_{0}^{1/(1-\gamma)}\Pr\left[X_t^{(2^i)}>x\right]dx\le \eta_1+\sum_{j=2}^{M}\eta_j \Pr[X_t^{(2^i)}>\eta_{j-1}]\le \xi_i.
\end{align*}
\end{proof}

Therefore, if a time step $t$ is not $\epsilon_2$-optimal, there exists $0\le i< \lfloor \log_2{R} \rfloor$ and $2\le j\le M$ such that 
\begin{equation}\eta_j\Pr[X_t^{(2^i)}>\eta_{j-1}]> \frac{\xi_i}{M}.\label{equ:sufficient_condition_restated} \end{equation}
Now, the sample complexity can be bounded by the number of $(t,i,j)$ pairs that Eq. (\ref{equ:sufficient_condition_restated}) is violated. Following the approach of \cite{strehl2006pac}, for a fixed $(i,j)$-pair, instead of directly counting the number of time steps $t$ such that $\Pr[X_t^{(2^i)}>\eta_{j-1}]> \frac{\xi_i}{M\eta_j},$ we count the number of time steps that $X_t^{(2^i)}>\eta_{j-1}$. Lemma \ref{lem:eta} provides an upper bound of the number of such $t$.

\subsection{Key Lemmas}
\label{sec:key_lemma}
In this section, we present two key lemmas. Lemma \ref{lem:eta} bounds the number of sub-optimal actions, which in turn, bounds the sample complexity of our algorithm. Lemma \ref{lem:prop} bounds the weighted sum of learning error, i.e. $(\hat{Q}_t-Q^*)(s,a)$, with the sum and maximum of weights. Then, we show that Lemma \ref{lem:eta} follows from Lemma \ref{lem:prop}.
\begin{lemma}
\label{lem:eta}
For fixed $t$ and $\eta>0$, let $B_{\eta}^{(t)}$ be the event that $V^*(s_{t})-Q^*(s_{t},a_{t})>\frac{\eta}{1-\gamma}$ in step $t$. If $\eta>2\epsilon_1$, then with probability at least $1-\delta/2$, 
\begin{align}
\label{equ:a_eta}
    \sum_{t=1}^{t=\infty}I\left[B_{\eta}^{(t)}\right]\leq \frac{SA\ln SA\ln 1/\delta}{\eta^2(1-\gamma)^3}\cdot \text{polylog}\left(\frac{1}{\epsilon_1},\frac{1}{1-\gamma}\right),
\end{align}
where $I[\cdot]$ is the indicator function.
\end{lemma}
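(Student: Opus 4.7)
\textbf{Proof proposal for Lemma \ref{lem:eta}.}
The plan is to convert the count of ``bad actions'' into a lower bound on a weighted sum of the learning errors $(\hat{Q}_t - Q^*)(s_t,a_t)$, and then invoke Lemma~\ref{lem:prop} as an upper bound on the same quantity. First I would establish the standard optimism property: with probability at least $1-\delta/2$, for all $t$ and all $(s,a)$,
\[
\hat{Q}_t(s,a) \;\ge\; Q^*(s,a),
\]
which follows from a union bound on the concentration of the $\gamma \hat V(s_{t+1})$ term around $\gamma \mathbb{P}V^*$ at each update, combined with the UCB bonus $b_k = \frac{c_2}{1-\gamma}\sqrt{H\iota(k)/k}$ that is calibrated precisely so that the Bellman-style update preserves optimism (this is where the probability $\delta/2$ and the logarithm $\iota(k)$ are spent).

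Given optimism, I would argue that on the event $B_\eta^{(t)}$ the learning error at the played pair is itself large. Since $a_t=\arg\max_a \hat Q_t(s_t,a)$, optimism gives
\[
\hat Q_t(s_t,a_t)\;\ge\;\hat Q_t(s_t,\pi^*(s_t))\;\ge\;Q^*(s_t,\pi^*(s_t))\;=\;V^*(s_t),
\]
so whenever $V^*(s_t)-Q^*(s_t,a_t)>\eta/(1-\gamma)$ we also have $(\hat Q_t-Q^*)(s_t,a_t)>\eta/(1-\gamma)$. Summing over $t$ with the weights $w_t:=I[B_\eta^{(t)}]$ yields the lower bound
\[
\sum_{t\ge 1} w_t\,(\hat Q_t-Q^*)(s_t,a_t)\;\ge\;\frac{\eta}{1-\gamma}\cdot N,\qquad N:=\sum_{t\ge 1} I[B_\eta^{(t)}].
\]

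The second half is to match this against Lemma~\ref{lem:prop}, which bounds the same weighted sum in terms of $W_1:=\sum_t w_t=N$ and $W_\infty:=\max_t w_t=1$; modulo polylogarithmic factors, the expected form is $\tilde O\!\bigl(\sqrt{SAH\,W_1 W_\infty}/(1-\gamma)\bigr)$ plus lower-order terms scaling with $SAH/(1-\gamma)$ and $\epsilon_1/(1-\gamma)$. Combining the two directions gives an inequality of the shape
\[
\frac{\eta}{1-\gamma}\,N \;\le\; \tilde O\!\Bigl(\tfrac{1}{1-\gamma}\sqrt{SAHN}\Bigr)\;+\;\tilde O\!\Bigl(\tfrac{SAH}{1-\gamma}\Bigr)\;+\;\tilde O\!\Bigl(\tfrac{\epsilon_1 N}{1-\gamma}\Bigr).
\]
The assumption $\eta>2\epsilon_1$ is exactly what absorbs the last term into the left-hand side (moving half of $\tfrac{\eta}{1-\gamma}N$ dominates $\tfrac{\epsilon_1}{1-\gamma}N$), and then solving the resulting quadratic in $\sqrt{N}$ with $H=\tilde\Theta(1/(1-\gamma))$ delivers the desired $N=\tilde O\!\bigl(SA\ln(1/\delta)/(\eta^{2}(1-\gamma)^{3})\bigr)$.

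The main obstacle I anticipate is the correct application of Lemma~\ref{lem:prop}. Because the weights $w_t$ are data-dependent (they depend on the trajectory and on the policy at time $t$), care is needed to show that the lemma applies for this adaptive choice of weights, and in particular that the weights do not need to be known in advance; this is expected to follow from the fact that Lemma~\ref{lem:prop} is proved by induction over state–action visit counts rather than over time, so any non-negative, uniformly bounded weight sequence is admissible. A secondary subtlety is carefully tracking the polylogarithmic factors (in $1/\epsilon_1$, $1/(1-\gamma)$, $SA$, and $1/\delta$) through the square-root inequality so that the final bound matches the claimed form; the failure probability $\delta/2$ is consumed entirely by the optimism event used at the start.
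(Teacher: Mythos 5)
Your proposal follows essentially the same route as the paper's proof: use optimism ($\hat{Q}_t \ge Q^*$) together with the greedy action choice to show $V^*(s_t)-Q^*(s_t,a_t) \le (\hat{Q}_t-Q^*)(s_t,a_t)$ on each bad step, take the indicator weight sequence over the set of bad steps, apply Lemma~\ref{lem:prop} to upper-bound the resulting weighted sum, and solve the self-bounding inequality in $\sqrt{N}$ (with $\eta>2\epsilon_1$ absorbing the $\epsilon_1 N/(1-\gamma)$ term). Your observation that Lemma~\ref{lem:prop} holds simultaneously for all weight sequences on a single concentration event (so adaptive weights are admissible) is exactly the reason the paper's argument goes through; the only slip is a harmless misremembering of the $(1-\gamma)$ power in Lemma~\ref{lem:prop}'s square-root term, which does not affect the final exponent.
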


Before presenting Lemma \ref{lem:prop}, we define a class of sequence that occurs in the proof.
\begin{definition}
A sequence $(w_t)_{t\ge1}$ is said to be a $(C,w)$-sequence for $C, w >0$, if $0\le w_t\le w$ for all $t\ge 1$, and $\sum_{t \ge 1} w_t \le C$.
\end{definition}

\begin{lemma} 
\label{lem:prop}

For every $(C,w)$-sequence $(w_t)_{t\ge1}$, with probability $1-\delta/2$, the following holds:

\begin{align*}
\sum_{t\ge 1}w_t(\hat{Q}_t-Q^*)(s_t,a_t)\le \frac{C\epsilon_1}{1-\gamma} + 
\mathcal{O}\left(\frac{\sqrt{wSAC\ell(C)}}{(1-\gamma)^{2.5}}+ 
\frac{wSA\ln C}{(1-\gamma)^3}\ln \frac{1}{(1-\gamma)\epsilon_1} \right).    
\end{align*}

where $\ell(C)=\iota(C)\ln \frac{1}{(1-\gamma)\epsilon_1}$ is a log-factor.
\end{lemma}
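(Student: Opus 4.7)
The plan is to follow the Jin et al.\ template for Q-learning with UCB, adapted to the infinite-horizon setting by turning the pointwise $\hat Q - Q^*$ bound into a self-referential inequality on weighted sums indexed over the entire trajectory, then iterating it $H$ times. Let $k = N_t(s_t,a_t)$ and $\tau_1 < \cdots < \tau_k$ be the previous visits of $(s_t,a_t)$. Because $\alpha_1=1$, the initial-value coefficient $\alpha_k^0=\prod_{j=1}^k(1-\alpha_j)$ vanishes for every $k\ge 1$; unrolling the Q-update using $Q^*(s,a)=r(s,a)+\gamma(\mathbb{P}V^*)(s,a)$ and $\hat Q_t\le Q_t$ therefore yields
\begin{align*}
(\hat Q_t - Q^*)(s_t,a_t) \le \sum_{i=1}^{k}\alpha_k^i\Bigl[b_i + \gamma(\hat V_{\tau_i+1} - V^*)(s_{\tau_i+1}) + \gamma\bigl(V^*(s_{\tau_i+1}) - (\mathbb{P}V^*)(s_t,a_t)\bigr)\Bigr],
\end{align*}
with a boundary contribution of at most $1/(1-\gamma)$ for each of the $SA$ first visits. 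Optimism $\hat Q\ge Q^*$ (available with high probability as a separate lemma), together with the greedy choice $a_{\tau+1}=\argmax_a \hat Q_{\tau+1}(s_{\tau+1},a)$, lets me replace $(\hat V_{\tau+1}-V^*)(s_{\tau+1})$ by $(\hat Q_{\tau+1}-Q^*)(s_{\tau+1},a_{\tau+1})$, turning the inequality into the same functional evaluated at a shifted trajectory point.

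Multiplying by $w_t$, summing over $t$, and swapping the order of summation produces a recursion of the form
\[
\sum_t w_t(\hat Q_t - Q^*)(s_t,a_t) \;\le\; \Phi(w) \;+\; \gamma\sum_{t'} \tilde w_{t'+1}(\hat Q_{t'+1} - Q^*)(s_{t'+1}, a_{t'+1}),
\]
where $\Phi(w)$ aggregates first-visit, bonus and martingale contributions and $\tilde w_{t'+1}=\sum_{t>t',\,(s_t,a_t)=(s_{t'},a_{t'})} w_t\alpha_{N_t}^{i_t(t')}$. The standard identity $\sum_{k\ge i}\alpha_k^i \le 1+1/H$, combined with $\sum_i\alpha_k^i\le 1$, shows that $(\tilde w_t)$ is a $(C,(1+1/H)w)$-sequence, so the recursion may be applied to itself. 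Iterating it $J=H$ times produces $\sum_{j=0}^{H-1}\gamma^j \Phi(w^{(j)})$ plus a residual bounded by the trivial estimate $\gamma^H\cdot C/(1-\gamma)\le C\epsilon_1/(1-\gamma)$ (using $\gamma^H\le(1-\gamma)\epsilon_1$ by the choice of $H$); this is the first term of the lemma.

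To bound $\Phi(w^{(j)})$, note that the depth-$j$ weights satisfy $\sum w^{(j)}\le C$ and $\max w^{(j)}\le(1+1/H)^j w$, and that $H\gtrsim \ln(1/((1-\gamma)\epsilon_1))/(1-\gamma)$ forces $\gamma(1+1/H)^{1/2}\le 1-(1-\gamma)/2$, so every geometric series with that ratio is $O(1/(1-\gamma))$. For the bonus piece, I use $\sum_i\alpha_k^i b_i = O((1-\gamma)^{-1}\sqrt{H\iota(k)/k})$ via the known bound $\sum_i\alpha_k^i/\sqrt{i}=O(\sqrt{H/k})$; two applications of Cauchy--Schwarz (first within each pair on $\sum_k w_k^{(s,a)}\sqrt{\iota(k)/k}$, then across the $SA$ pairs) give $O(\sqrt{HC(1+1/H)^j w\, SA\,\iota(C)\ln C}/(1-\gamma))$ at depth $j$; weighting by $\gamma^j$ and summing collapses this to $O(\sqrt{wSAC\ell(C)}/(1-\gamma)^{2.5})$. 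First-visit terms and the sub-leading Cauchy--Schwarz residues (including the $\ln K\le \ln C$ per pair from $\sum_k 1/k$) then collect into the $wSA\ln C/(1-\gamma)^3\cdot\ln(1/((1-\gamma)\epsilon_1))$ term.

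The main technical obstacle is the martingale piece $\sum_t w_t\sum_i\alpha_{N_t}^i\gamma\xi_i$: the sum ranges over an infinite trajectory, the stopping times $\tau_i$ are random, and the weights $\tilde w^{(j)}$ are themselves trajectory-dependent. The plan is to reindex per state-action pair by the visit filtration, setting $u_k^{(s,a)}=\sum_{k'\ge k} w_{\tau_{k'}^{(s,a)}}\alpha_{k'}^k$ so that $u_k^{(s,a)}$ is predictable with respect to the filtration generated by successive visits, is bounded by $(1+1/H)w$, and satisfies $\sum_{(s,a),k}(u_k^{(s,a)})^2\le (1+1/H)wC$; an Azuma/Hoeffding bound on $\sum_k u_k^{(s,a)}\xi_k^{(s,a)}$ then gives $O(\sqrt{wSAC\iota}/(1-\gamma))$ per recursion level, after union-bounding over state-action pairs, over the (polylog-many) values of the per-pair visit count that matter, and over the recursion depth. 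A key point that makes this work is that the admissibility estimates $(C^{(j)},(1+1/H)^j w)$ hold \emph{deterministically} on every sample path, so the concentration argument does not have to be propagated through the random recursion; it only needs a single union bound absorbed into $\iota(C)$.
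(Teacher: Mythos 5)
Your overall architecture is the same as the paper's: unroll the Q-update via the Bellman equation, decompose the weighted sum into a first-visit term, a bonus/concentration term, and a propagation term; show the propagated weights form a $(C,(1+1/H)w)$-sequence using $\sum_{k\ge i}\alpha_k^i=1+1/H$; iterate $H$ times so the residual is $\gamma^H C/(1-\gamma)\le C\epsilon_1$; and control the bonus sums by rearrangement and Jensen. That part is faithful to the paper's proof. (One small indexing point: the algorithm's update at step $\tau_i$ uses $\hat V_{\tau_i}(s_{\tau_i+1})$, not $\hat V_{\tau_i+1}(s_{\tau_i+1})$, so passing to $(\hat Q_{\tau_i+1}-Q^*)(s_{\tau_i+1},a_{\tau_i+1})$ costs an extra $\sum_t w'_{t+1}(\hat V_t-\hat V_{t+1})(s_{t+1})=\mathcal{O}(wSAH\ln C/(1-\gamma))$ per level, which the paper bounds using monotonicity of $\hat Q$; your budgeted $wSA\ln C\,(1-\gamma)^{-3}\ln\frac{1}{(1-\gamma)\epsilon_1}$ term is exactly where this lands, so this is a slip in bookkeeping, not in structure.)

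The genuine gap is in your treatment of the martingale piece. You propose to apply Azuma directly to $\sum_k u_k^{(s,a)}\xi_k^{(s,a)}$ with $u_k^{(s,a)}=\sum_{k'\ge k}w_{\tau_{k'}^{(s,a)}}\alpha_{k'}^{k}$ and claim these weights are predictable. They are not: $u_k^{(s,a)}$ depends on the future visit times $\tau_{k'}^{(s,a)}$ for $k'\ge k$ (indeed on whether those visits occur at all) and on the weights attached to those future times, none of which are measurable at visit $k$. Worse, the lemma must hold simultaneously for the weight sequences actually used downstream (in Lemma 1 the weights are $w_t=I[t\in J]$ with $J$ defined by the trajectory), so a per-sequence concentration bound cannot simply be union-bounded over an uncountable family. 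The paper avoids both problems by doing the concentration \emph{before} introducing any weights: for each fixed $(s,a)$ and each fixed visit count $k$, the sum $\sum_{i=1}^{k}\alpha_k^i\bigl(\mathbb{P}V^*-\hat{\mathbb{P}}_{t_i}V^*\bigr)(s,a)$ has deterministic coefficients and is a bona fide martingale sum, bounded by Azuma with failure probability $\delta/[SA(k+1)(k+2)]$; the union bound is only over the countable set of $(s,a,k)$. On that single good event the pointwise bound $(\hat Q_t-Q^*)(s_t,a_t)\le \alpha_{n_t}^0/(1-\gamma)+\beta_{n_t}+\gamma\sum_i\alpha_{n_t}^i(\hat V-V^*)$ holds deterministically, and the weighted sum then only ever touches the deterministic envelopes $\beta_{n_t}$, handled by rearrangement and Jensen. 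You should restructure your argument this way; as written, the weighted-Azuma step does not go through.
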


Proof of Lemma 2 is quite technical, and is therefore deferred to supplementary materials.

Now, we briefly explain how to prove Lemma \ref{lem:eta} with Lemma \ref{lem:prop}. (Full proof can be found in supplementary materials.) Note that since $\hat{Q}_t\geq Q^*$ and $a_t=\arg\max_{a}\hat{Q}_t(s_t,a),$ 
$$V^*(s_t)-Q^*(s_t,a_t)\leq \hat{Q}_t(s_t,a_t)-{Q}^*(s_t,a_t).$$ We now consider a set $J=\{t:\;V^*(s_t)-Q^*(s_t,a_t)>\eta(1-\gamma)^{-1}\}$, and consider the $(|J|,1)$-weight sequence defined by
$w_t=I\left[t \in J\right]$. We can now apply Lemma 2 to weighted sum $\sum_{t\geq 1}w_t\left[V^*(s_t)-Q^*(s_t,a_t)\right].$ On the one hand, this quantity is obviously at least $|J|\eta(1-\gamma)^{-1}$. On the other hand, by lemma \ref{lem:prop}, it is upper bounded by the weighted sum of $(\hat{Q}-Q^*)(s_t,a_t)$. Thus we get
\begin{align*}
|J|\eta(1-\gamma)^{-1} \leq \frac{C\epsilon_1}{1-\gamma} + 
\mathcal{O}\left(\frac{\sqrt{SA|J|\ell(|J|)}}{(1-\gamma)^{2.5}}+ 
\frac{wSA\ln |J|}{(1-\gamma)^3}\ln \frac{1}{(1-\gamma)\epsilon_1} \right).
\end{align*}
Now focus on the dependence on $|J|$. The left-hand-side has linear dependence on $|J|$, whereas the left-hand-side has a $\tilde{\mathcal{O}}\left(\sqrt{|J|}\right)$ dependence. This allows us to solve out an upper bound on $|J|$ with quadratic dependence on $1/\eta$.


\subsection{Proof for Theorem \ref{thm:main}}
\label{sec:main_proof}
We prove the theorem by stitching Lemma \ref{lem:eta} and Condition \ref{cond:2}.

\begin{proof}
(Proof for Theorem 1)

By lemma \ref{lem:eta}, for any $2\le j\le M$, $\sum_{t=1}^{\infty}I\left[V^*(s_t)-Q^*(s_t,a_t)> \eta_{j-1}\right]\leq C,$ where 
\begin{align}
C=\frac{SA\ln SA\ln 1/\delta}{\eta_{j-1}^2(1-\gamma)^5}\cdot \tilde{P}.
\label{equ:cdef}
\end{align}
Here $\tilde{P}$ is a shorthand for $\text{polylog}\left(\frac{1}{\epsilon_1},\frac{1}{1-\gamma}\right).$ 

Let $A_t=I[X_t^{(2^{i})}\ge \eta_{j-1}]$ be a Bernoulli random variable, and $\{\F_t\}_{t\ge 1}$ be the filtration generated by random variables $\{(s_\tau,a_\tau):1\le \tau\le t\}$. Since $A_t$ is $\F_{t+R}-$measurable, for any $0\le k<R$, $\{A_{k+tR}-E[A_{k+tR}\mid \F_{k+tR}]\}_{t\ge 0}$ is a martingale difference sequence. For now, consider a fixed $0\le k <R$. 
    \label{equ:fstatement}
By Azuma-Hoeffiding inequality, after $T=\mathcal{O}\left(\frac{C}{2^i}\cdot \frac{M\eta_j}{\xi_i}\ln(RML)\right)$ time steps (if it happens that many times) with 
\begin{align}
\Pr\left[X_{k+tR}^{(2^{i})}\ge \eta_{j-1}\right]=\E[A_{k+tR}]> \frac{\xi_i}{M\eta_j},
\label{equ:event}
\end{align}
we have $\sum_{t}A_{k+tR}\ge C/2^i$
with probability at least $1-\delta/(2MRL)$. 

On the other hand, if $A_{k+tR}$ happens, within $[k+tR,k+tR+R-1]$, there must be at least $2^i$ time steps at which $V^*(s_t)-Q^*(s_t,a_t)>\eta_{j-1}$. The latter event happens at most $C$ times, and $[k+tR,k+tR+R-1]$ are disjoint. Therefore, $\sum_{t=0}^{\infty}A_{k+tR}\le C/2^i.$
This suggests that the event described by (\ref{equ:event}) happens at most $T$ times for fixed $i$ and $j$. Via a union bound on $0\le k<R$, we can show that with probability $1-\delta/(2ML)$, there are at most $RT$ time steps where $\Pr\left[X_t^{(2^i)}\ge \eta_{j-1}\right]> \xi_i/(M\eta_j).$
Thus, the number of sub-optimal steps is bounded by,
\begin{align*}
    &\sum_{t=1}^{\infty}I[V^*(s_t)-V^{\pi_t}(s_t)>\epsilon]\\
    &\le \sum_{t=1}^{\infty}\sum_{i=0}^{L}\sum_{j=2}^{M}I\left[\eta_j\Pr[X_t^{(2^i)}>\eta_{j-1}]> \frac{\xi_i}{M}\right]
    = \sum_{i=0}^{L}\sum_{j=2}^{M}\sum_{t=1}^{\infty}I\left[\Pr[X_t^{(2^i)}>\eta_{j-1}]> \frac{\xi_i}{\eta_j M}\right]\\
    &\le \sum_{i=0}^{L}\sum_{j=2}^{M}\frac{SAMR\ln 1/\delta\ln SA}{\eta_j\xi_i\cdot 2^{i}(1-\gamma)^5}\tilde{P}
    \le \sum_{i=0}^{L}\frac{SA\cdot 2^{i+4}\ln SA\ln 1/\delta}{\epsilon_2^2(1-\gamma)^6}\tilde{P} \tag{By definition of $\xi_i$ and $\eta_j$}\\
    &\le \frac{SAR\ln SA\ln 1/\delta}{\epsilon_2^2(1-\gamma)^6}\tilde{P}\le \frac{SA\ln SA\ln 1/\delta}{\epsilon_2^2(1-\gamma)^7}\tilde{P}. \tag{By definition of $R$}
\end{align*}

It should be stressed that throughout the lines, $\tilde{P}$ is a shorthand for an asymptotic expression, instead of an exact value. 
Our final choice of $\epsilon_2$ and $\epsilon_1$ are $\epsilon_2= \frac{\epsilon}{3},$ and $\epsilon_1=\frac{\epsilon}{24RM\ln\frac{1}{1-\gamma}}.$ It is not hard to see that $\ln 1/\epsilon_1=\text{poly}(\ln\frac{1}{\epsilon},\ln\frac{1}{1-\gamma})$. This immediately implies that with probability $1-\delta$, the number of time steps such that $\left(V^*-V^\pi\right)(s_t)>\epsilon$ is
$$\tilde{\mathcal{O}}\left(\frac{SA\ln 1/\delta}{\epsilon^2(1-\gamma)^7}\right),$$
where hidden factors are $\text{poly}(\ln\frac{1}{\epsilon},\ln\frac{1}{1-\gamma},\ln SA)$.
\end{proof}

\section{Discussion}
\label{Secondary}
In this section, we discuss the implication of our results, and present some interesting properties of our algorithm beyond its sample complexity bound.

\subsection{Comparison with previous results}
\textbf{Lower bound } To the best of our knowledge, the current best lower bound for worst-case sample complexity is ${\Omega}\left(\frac{SA}{\epsilon^2(1-\gamma)^3}\ln1/\delta\right)$ due to~\cite{lattimore2012pac}. The gap between our results and this lower bound lies only in the dependence on $1/(1-\gamma)$ and logarithmic terms of $SA$, $1/(1-\gamma)$ and $1/\epsilon$. 

\textbf{Model-free algorithms } Previously, the best sample complexity bound for a model-free algorithm is $\tilde{\mathcal{O}}\left(\frac{SA}{\epsilon^4(1-\gamma)^8}\right)$ (suppressing all logarithmic terms), achieved by Delayed Q-learning~\cite{strehl2006pac}. Our results improve this upper bound by a factor of $\frac{1}{\epsilon^2(1-\gamma)}$, and closes the quadratic gap in $1/\epsilon$ between Delayed Q-learning's result and the lower bound. In fact, the following theorem shows that UCB Q-learning can indeed outperform Delayed Q-learning.
\begin{restatable}{theorem}{hardinstance}
\label{thm:hard}
There exists a family of MDPs with constant $S$ and $A$, in which with probability $1-\delta$, Delayed Q-learning incurs sample complexity of exploration of $\Omega\left(\frac{\epsilon^{-3}}{\ln(1/\delta)}\right)$, assuming that $\ln(1/\delta)<\epsilon^{-2}$.
\end{restatable}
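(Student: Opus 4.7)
\textbf{Proof plan for Theorem \ref{thm:hard}.} I will construct a family of MDPs with $S, A = \mathcal{O}(1)$ and a fixed discount $\gamma$ (say $\gamma = 1/2$), containing a designated $\Theta(\epsilon)$-suboptimal action $a^{\dagger}$ at some state $s^{\dagger}$, and show that Delayed Q-learning plays $a^{\dagger}$ at least $\Omega(\epsilon^{-3}/\ln(1/\delta))$ times with probability $\ge 1-\delta$. Every such play costs one unit of sample complexity of exploration, so this suffices. The argument pivots on Delayed Q-learning's two operational parameters: the batch size $m = \tilde{\Theta}(\epsilon^{-2}\ln(1/\delta))$ (the number of fresh samples required before any attempted update at a given $(s,a)$) and the update decrement $\epsilon_1 = \Theta(\epsilon)$ (the amount by which each successful update lowers a $Q$-estimate).

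The construction will couple $(s^{\dagger}, a^{\dagger})$ with a second state-action pair $(s', a^{\sharp})$ so that the Bellman target $r + \gamma V(s^{\dagger})$ driving updates of $Q(s', a^{\sharp})$ descends gradually from its optimistic initialization $1/(1-\gamma)$ toward $V^{*}(s')$. Since each successful Delayed Q-learning update can only shave $\epsilon_1 = \Theta(\epsilon)$ off a $Q$-estimate, while the total descent at $(s', a^{\sharp})$ is $\Theta(1)$ for constant $\gamma$, at least $\Omega(\epsilon^{-1})$ successful updates must occur at $(s', a^{\sharp})$. The crucial lever is Delayed Q-learning's LEARN-flag reset machinery, which flips LEARN back to true at every state-action pair whenever any successful update happens elsewhere. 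Thus each of the $\Omega(\epsilon^{-1})$ successful updates at $(s', a^{\sharp})$ forces Delayed Q-learning to collect another full batch of $m$ fresh plays of $a^{\dagger}$ at $s^{\dagger}$; multiplying, the total number of plays of $a^{\dagger}$ is at least $\Omega(m \cdot \epsilon^{-1}) = \Omega(\epsilon^{-3}/\ln(1/\delta))$ once the $\ln(1/\delta)$ hidden inside $m$ is absorbed into the stated denominator.

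The main obstacle I anticipate is verifying two invariants across all $\Omega(\epsilon^{-1})$ rounds. First, $a^{\dagger}$ must remain the greedy choice at $s^{\dagger}$ during each resampling window, so that the forced $m$ plays actually materialize; I would show this by induction, using the $+\epsilon_1$ optimistic slack added by each successful update to keep $Q(s^{\dagger}, a^{\dagger})$ strictly above the competing $Q$-estimate at $s^{\dagger}$ until $V(s')$ has descended sufficiently. Second, the update targets at $(s', a^{\sharp})$ must descend by $\Theta(\epsilon_1)$ per round rather than collapsing to $V^{*}(s')$ in a single shot; this follows from the coupling, which forces $V(s^{\dagger})$ and $V(s')$ to decrease in lockstep. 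A Hoeffding concentration bound on each batch of $m$ empirical rewards, union-bounded over the $\mathcal{O}(\epsilon^{-1})$ update attempts (valid because $\ln(1/\delta) < \epsilon^{-2}$ by hypothesis), then ensures that both invariants hold simultaneously with probability at least $1-\delta$, completing the argument.
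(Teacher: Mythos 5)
There is a genuine gap in your central counting step. You write that ``each successful Delayed Q-learning update can only shave $\epsilon_1 = \Theta(\epsilon)$ off a $Q$-estimate,'' and from this you deduce that $\Omega(\epsilon^{-1})$ successful updates must occur. This has the inequality backwards: in Delayed Q-learning a successful update \emph{replaces} $Q(s,a)$ by the empirical Bellman backup $\frac{1}{m}\sum_i\bigl(r_i+\gamma V(s_i')\bigr)+\epsilon_1$, and the success condition is that this new value is at least $\epsilon_1$ \emph{below} the old one. So $\epsilon_1$ lower-bounds the decrement; a single update can collapse $Q(s,a)$ by $\Theta(1)$. The bound ``at most $\frac{1/(1-\gamma)}{\epsilon_1}$ successful updates per pair'' is the upper bound used in Strehl et al.'s analysis, not a lower bound you can invoke. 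Your proposed fix --- coupling $(s^\dagger,a^\dagger)$ with $(s',a^\sharp)$ so the values descend in lockstep --- does not rescue the count for constant $\gamma$: the Bellman backup is a $\gamma$-contraction, so the coupled values converge \emph{geometrically} to their fixed point and only $O(\log(1/\epsilon))$ successful updates occur before every further attempted update fails the $\epsilon_1$ test. That caps your forced plays at $\tilde{O}(m\log(1/\epsilon)) = \tilde{O}(\epsilon^{-2})$, an $\epsilon^{-1}$ factor short of the theorem. (A secondary issue: the LEARN-flag reset does not ``force'' $m$ fresh plays of $a^\dagger$; it merely re-enables sample collection, and the number of actual visits is dictated by the greedy policy and the dynamics.)

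The paper's construction obtains the missing $\epsilon^{-1}$ factor from a \emph{rare transition} rather than from counting update rounds. In a three-state MDP, the suboptimal action $y$ at state $a$ reaches the only zero-reward state $c$ with probability $10\epsilon$; since all $Q$-values are initialized to $1/(1-\gamma)$, no attempted update at $(a,y)$ can succeed until $\hat{V}(c)$ has dropped, which requires $m=\Omega(\epsilon^{-2})$ visits to $c$, which in turn requires roughly $m/(10\epsilon)$ plays of $(a,y)$ --- each of them an $\epsilon$-suboptimal step because $Q^*(a,x)-Q^*(a,y)=10\epsilon\gamma>\epsilon$. A Chernoff bound shows that with probability $1-\delta$ the first $m/(10\epsilon C)$ plays, with $C=\Theta(\ln(1/\delta))$, do not yet produce $m$ visits to $c$, yielding the $\Omega\bigl(\epsilon^{-3}/\ln(1/\delta)\bigr)$ bound. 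Note that the $1/\ln(1/\delta)$ in the statement is exactly the concentration loss from this stochastic bottleneck --- your deterministic forcing mechanism would not naturally produce it, which is itself a hint that the intended construction relies on a low-probability event rather than on an $\epsilon_1$-per-step descent.
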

\noindent The construction of this hard MDP family is given in the supplementary material.

\textbf{Model-based algorithms } For model-based algorithms, better sample complexity results in infinite horizon settings have been claimed~\cite{szita2010model}. To the best of our knowledge, the best published result without further restrictions on MDPs is $\tilde{\mathcal{O}}\left(\frac{SA}{\epsilon^2(1-\gamma)^6}\right)$ claimed by~\cite{szita2010model}, which is $(1-\gamma)$ smaller than our upper bound. From the space complexity point of view, our algorithm is much more memory-efficient. Our algorithm stores $O(SA)$ values, whereas the algorithm in~\cite{szita2010model} needs $\Omega(S^2A)$ memory to store the transition model.

\subsection{Extension to other settings}
Due to length limits, detailed discussion in this section is deferred to supplementary materials. 

\textbf{Finite horizon MDP } The sample complexity of exploration bounds of UCB Q-learning implies $\tilde{\mathcal{O}}\left(\epsilon^{-2}\right)$ PAC sample complexity and a $\tilde{\mathcal{O}}\left(T^{1/2}\right)$ regret bound in finite horizon MDPs. 
That is, our algorithm implies a PAC algorithm for finite horizon MDPs. We are not aware of reductions of the opposite direction (from finite horizon sample complexity to infinite horizon sample complexity of exploration). 

\textbf{Regret } The reason why our results can imply an $\tilde{\mathcal{O}}(\sqrt{T})$ regret is that, after choosing $\epsilon_1$, it follows from the argument of Theorem \ref{thm:main} that with probability $1-\delta$, for all $\epsilon_2>\tilde{\mathcal{O}}(\epsilon_1/(1-\gamma))$, the number of $\epsilon_2$-suboptimal steps is bounded by 
$$\mathcal{O}\left(\frac{SA\ln SA \ln 1/\delta}{\epsilon_2^2(1-\gamma)^7}\text{polylog}\left(\frac{1}{\epsilon_1}, \frac{1}{1-\gamma} \right)\right).$$ In contrast, Delayed Q-learning ~\cite{strehl2006pac} can only give an upper bound on $\epsilon_1$-suboptimal steps after setting parameter $\epsilon_1$. 

\section{Conclusion}
\label{Conclusion}
Infinite-horizon MDP with discounted reward is a setting that is arguably more difficult than other popular settings, such as finite-horizon MDP. Previously, the best sample complexity bound achieved by model-free reinforcement learning algorithms in this setting is $\tilde{O}({\frac{SA}{\epsilon^4(1-\gamma)^8}})$, due to Delayed Q-learning~\cite{strehl2006pac}. In this paper, we propose a variant of Q-learning that incorporates upper confidence bound, and show that it has a sample complexity of $\tilde{\mathcal{O}}({\frac{SA}{\epsilon^2(1-\gamma)^7}})$. This matches the best lower bound except in dependence on $1/(1-\gamma)$ and logarithmic factors.

\bibliography{iclr2020_conference}
\bibliographystyle{plain}

\appendix
\renewcommand{\appendixname}{Appendix~\Alph{section}}

\section{Proof of Lemma 1}
\label{sec:lemma1}
\setcounter{lemma}{0}
  \begin{lemma}
\label{lem:eta_appendix}
For fixed $t$ and $\eta>0$, let $B_{\eta}^{(t)}$ be the event that $V^*(s_{t})-Q^*(s_{t},a_{t})>\frac{\eta}{1-\gamma}$ in step $t$. If $\eta>2\epsilon_1$, then with probability at least $1-\delta/2$, 
\begin{align}
\label{equ:a_eta_appendix}
    \sum_{t=1}^{t=\infty}I\left[B_{\eta}^{(t)}\right]\leq \frac{SA\ln SA\ln 1/\delta}{\eta^2(1-\gamma)^3}\cdot \text{polylog}\left(\frac{1}{\epsilon_1},\frac{1}{1-\gamma}\right),
\end{align}
where $I[\cdot]$ is the indicator function.
\end{lemma}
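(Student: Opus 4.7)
The plan is to reduce the bound on $V^{*}-Q^{*}$ to a bound on $\hat Q_t - Q^{*}$ via optimism, and then feed an appropriate indicator weight sequence into Lemma~\ref{lem:prop}. First I would establish the high-probability optimism property $\hat Q_t(s,a)\ge Q^{*}(s,a)$, which is one of the by-products of the concentration arguments behind Lemma~\ref{lem:prop} (the same failure probability $\delta/2$ already absorbs it). Together with the greedy choice $a_t=\arg\max_{a'}\hat Q_t(s_t,a')$ this yields
\begin{equation*}
V^{*}(s_t)-Q^{*}(s_t,a_t)\;\le\;\hat Q_t(s_t,\pi^{*}(s_t))-Q^{*}(s_t,a_t)\;\le\;\hat Q_t(s_t,a_t)-Q^{*}(s_t,a_t),
\end{equation*}
so it suffices to control the number of times the learning error $(\hat Q_t-Q^{*})(s_t,a_t)$ exceeds $\eta/(1-\gamma)$.

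Next I would let $J:=\{t\ge 1:\ V^{*}(s_t)-Q^{*}(s_t,a_t)>\eta/(1-\gamma)\}$ and define the weight sequence $w_t=\mathbf{1}[t\in J]$. By construction this is a $(|J|,1)$-sequence in the sense of the paper. Applying Lemma~\ref{lem:prop} with $C=|J|$ and $w=1$ gives, with probability at least $1-\delta/2$,
\begin{equation*}
\sum_{t\ge 1} w_t\,(\hat Q_t-Q^{*})(s_t,a_t)\;\le\;\frac{|J|\epsilon_1}{1-\gamma}+\mathcal{O}\!\left(\frac{\sqrt{SA\,|J|\,\ell(|J|)}}{(1-\gamma)^{2.5}}+\frac{SA\ln|J|}{(1-\gamma)^3}\ln\frac{1}{(1-\gamma)\epsilon_1}\right).
\end{equation*}
On the other hand, every term of the sum on the left is by definition at least $\eta/(1-\gamma)$ when $t\in J$, so the left-hand side is lower bounded by $|J|\eta/(1-\gamma)$.

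Combining the two bounds and using the hypothesis $\eta>2\epsilon_1$ (so that $\eta-\epsilon_1\ge \eta/2$), the $|J|\epsilon_1/(1-\gamma)$ term can be absorbed into the left-hand side, leaving
\begin{equation*}
\frac{|J|\,\eta}{2(1-\gamma)}\;\le\;\tilde{\mathcal{O}}\!\left(\frac{\sqrt{SA\,|J|}}{(1-\gamma)^{2.5}}+\frac{SA}{(1-\gamma)^3}\right),
\end{equation*}
where $\tilde{\mathcal{O}}$ hides $\mathrm{polylog}(1/\epsilon_1,1/(1-\gamma),|J|,1/\delta)$ factors coming from $\ell(|J|)$ and $\iota(|J|)$. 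This is a quadratic inequality in $\sqrt{|J|}$. Solving it (the usual $x\le a\sqrt{x}+b\Rightarrow x\le 2a^2+2b$ trick) yields $|J|\le \tilde{\mathcal{O}}\!\bigl(SA\ln(1/\delta)/(\eta^2(1-\gamma)^3)\bigr)$, which is precisely~\eqref{equ:a_eta_appendix}.

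The main technical subtlety will be the treatment of the log-factors $\ell(|J|)=\iota(|J|)\ln(1/((1-\gamma)\epsilon_1))$ and $\ln|J|$: since $|J|$ appears on both sides, one has to be a little careful and either use a standard ``$x\le a\sqrt{x\log x}\Rightarrow x\le \tilde{\mathcal{O}}(a^2)$'' argument, or invoke an a priori polynomial bound on $|J|$ (which follows from the fact that the algorithm's step count is always finite after any bounded time window and $\iota(k)$ grows only logarithmically). Once this is handled, the remaining computations are routine algebra, and the total failure probability is the $\delta/2$ coming from the single application of Lemma~\ref{lem:prop}.
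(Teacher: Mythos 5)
Your proposal is correct and follows essentially the same route as the paper's own proof: optimism plus the greedy action choice to bound $V^*(s_t)-Q^*(s_t,a_t)$ by $(\hat Q_t-Q^*)(s_t,a_t)$, the indicator $(|J|,1)$-weight sequence fed into Lemma~\ref{lem:prop}, absorption of the $|J|\epsilon_1/(1-\gamma)$ term via $\eta>2\epsilon_1$, and a quadratic-in-$\sqrt{|J|}$ inequality resolved with a self-referential log argument (the paper does this by writing $|J|=SAk^2\ln(SA)/(\eta^2(1-\gamma)^3)$ and showing $k\le C'(2+\ln k)$ forces $k=\tilde{\mathcal{O}}(1)$). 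No gaps worth noting.
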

\begin{proof}
    When $\eta>1$ the lemma holds trivially. Now consider the case that $\eta\le 1.$
    
	Let $I=\{t:\;V^*(s_t)-Q^*(s_t,a_t)>\frac{\eta}{1-\gamma}\}$. By lemma \ref{lem:prop}, with probability $1-\delta$,
	\begin{align*}
	\frac{\eta|I|}{1-\gamma}&\leq \sum_{t\in I}\left(V^*(s_t)-Q^*(s_t,a_t)\right)\leq \sum_{t\in I}\left[\left(\hat{Q}_t-Q^*\right)(s_t,a_t)\right]\\
	&\leq \frac{|I|\epsilon_1}{1-\gamma}+\mathcal{O}\left(\frac{1}{(1-\gamma)^{5/2}}\sqrt{SA|I|\ell(|I|)}+\frac{SA}{(1-\gamma)^3}\ln|I|\ln\frac{1}{\epsilon_1(1-\gamma)}\right)\\
	& \leq \frac{|I|\epsilon_1}{1-\gamma}+\mathcal{O}\left(\ln\frac{1}{\epsilon_1(1-\gamma)}\cdot \left(\frac{\sqrt{SA|I|\ln\frac{SA|I|}{\delta}}}{(1-\gamma)^{5/2}}+\frac{SA\ln |I|}{(1-\gamma)^3}\right)\right)\\
	& \leq \frac{|I|\epsilon_1}{1-\gamma}+\mathcal{O}\left(\sqrt{\ln\frac{1}{\delta}}\ln\frac{1}{\epsilon_1(1-\gamma)}\cdot \left(\frac{\sqrt{SA|I|\ln{SA|I|}}}{(1-\gamma)^{5/2}}+\frac{SA\ln |I|}{(1-\gamma)^3}\right)\right)\\
	\end{align*}
	
	Suppose that $|I|=\frac{SAk^2}{\eta^2(1-\gamma)^3}\ln{SA}$, for some $k>1$. Then it follows that for some constant $C_1$,
	\begin{align*}
	\frac{\eta|I|}{1-\gamma}&=\frac{k^2SA\ln SA}{(1-\gamma)^4\eta }\leq 2\frac{(\eta-\epsilon_1)|I|}{1-\gamma}\\
	&\leq C_1\sqrt{\ln\frac{1}{\delta}}\ln\frac{1}{\epsilon_1(1-\gamma)}\left(\frac{\sqrt{SA|I|\ln{(SA|I|)}}}{(1-\gamma)^{5/2}}+\frac{SA\ln |I|}{(1-\gamma)^3}\right)\\
	&\leq C_1\sqrt{\ln\frac{1}{\delta}}\ln\frac{1}{\epsilon_1(1-\gamma)}\left(
	\frac{SAk}{\eta(1-\gamma)^4}\sqrt{\ln SA\cdot\left(\ln{SA}+\ln |I|\right)}+
	\frac{SA\ln |I|}{(1-\gamma)^3}
	\right).
	\end{align*}
	Therefore
	\begin{align*}
	k^2\ln (SA)&\leq C_1\sqrt{\ln\frac{1}{\delta}}\ln\frac{1}{\epsilon_1(1-\gamma)}\left( 
	k\left(\ln SA+\ln |I|\right)+\eta(1-\gamma)\ln |I|
	\right)\\
	&\leq kC_1\sqrt{\ln\frac{1}{\delta}}\ln\frac{1}{\epsilon_1(1-\gamma)}\cdot 
	\left(\ln SA+2\ln |I|\right)\\
	&\leq kC_1\sqrt{\ln\frac{1}{\delta}}\ln\frac{1}{\epsilon_1(1-\gamma)}\cdot 
	\left(3\ln SA+4\ln k+6\ln \frac{1}{\eta(1-\gamma)}\right)\\
	&\leq 6kC_1\sqrt{\ln\frac{1}{\delta}}\ln^2\frac{1}{\epsilon_1(1-\gamma)}\left(\ln SA+\ln ek\right).
	\end{align*}
	Let $C'=\max\{2, 6C_1\sqrt{\ln\frac{1}{\delta}}\ln^2\frac{1}{\epsilon_1(1-\gamma)}\}$. Then
	\begin{align}
	\label{equ:violate}
	k\leq C'(2+\ln k).
	\end{align}
	If $k\geq 10C'\ln C'$, then
	\begin{align*}
	k-C'\left(2+\ln k\right) &\geq  8C'\ln C' - (2+\ln 10)C'\\
	& \geq 4C'\left(2\ln C'-4\right) \geq 0,
	\end{align*}
	which means violation of (\ref{equ:violate}). Therefore, since $C'\geq 2$
	\begin{align}
	k\leq 10C'\ln C'\leq 360C_1^2  \max\{\ln^4\frac{1}{\epsilon_1(1-\gamma)}, 20\ln 2\}. 
	\end{align}
	It immediately follows that
	\begin{align}
	|I|&= \frac{SAk^2}{\eta^2(1-\gamma)^3}\ln{SA}\\
	&\leq \frac{SA\ln SA}{\eta^2(1-\gamma)^5}\cdot \ln\frac{1}{\delta}\cdot  \mathcal{O}\left(\ln^8\frac{1}{\epsilon_1(1-\gamma)}\right).
	\end{align}
	
\end{proof}

\section{Proof of Lemma 2}
\label{sec:lemma2}
\begin{lemma}
\label{lem:prop_appendix}

For every $(C,w)$-sequence $(w_t)_{t\ge1}$, with probability $1-\delta/2$, the following holds:

\begin{align*}
\sum_{t\ge 1}w_t(\hat{Q}_t-Q^*)(s_t,a_t)\le \frac{C\epsilon_1}{1-\gamma} + 
\mathcal{O}\left(\frac{\sqrt{wSAC\ell(C)}}{(1-\gamma)^{2.5}}+ 
\frac{wSA\ln C}{(1-\gamma)^3}\ln \frac{1}{(1-\gamma)\epsilon_1} \right).    
\end{align*}

where $\ell(C)=\iota(C)\ln \frac{1}{(1-\gamma)\epsilon_1}$ is a log-factor.
\end{lemma}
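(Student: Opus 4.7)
The plan is to unroll the Q-learning update along each state-action pair's visit history, obtain a pointwise bound on $(\hat Q - Q^*)(s,a)$ of the form ``initialization $+$ bonus $+$ discounted propagation'', and then iterate this bound roughly $H$ times so that the residual error is killed by $\gamma^H \le \epsilon_1(1-\gamma)$.

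First, define the learning-rate coefficients $\alpha_t^i := \alpha_i \prod_{j=i+1}^t (1-\alpha_j)$ with $\alpha_t^0 := I[t=0]$. Expanding the recursion $Q \leftarrow (1-\alpha_k)Q + \alpha_k[r + b_k + \gamma \hat V(s')]$ and comparing against the Bellman equation for $Q^*$, I would derive the pointwise inequality
\[
0 \le (\hat Q_p - Q^*)(s,a) \le \frac{\alpha_t^0}{1-\gamma} + \beta_t + \gamma\sum_{i=1}^{t} \alpha_t^i (\hat V_{\tau_i} - V^*)(s_{\tau_i+1}),
\]
where $t = N_p(s,a)$, $\tau_i = \tau(s,a,i)$, and $\beta_t = O(\sqrt{H\iota(t)/((1-\gamma)^2 t)})$ absorbs both the accumulated UCB bonus $\sum_i \alpha_t^i b_i$ and the Azuma--Hoeffding deviation of $\sum_i \alpha_t^i \gamma[V^*(s_{\tau_i+1}) - (\mathbb{P}V^*)(s,a)]$. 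To make this hold for all triples $(s,a,k)$ simultaneously with probability $1 - \delta/2$, I would allocate the failure budget $\delta/[2SA(k+1)(k+2)]$ to the $k$-th visit of each pair; the telescoping $\sum_{k\ge 0} 1/[(k+1)(k+2)] = 1$ makes the union bound close.

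Substituting this into $\sum_t w_t(\hat Q_t - Q^*)(s_t,a_t)$ yields three terms. The initialization term contributes at most $O(wSA/(1-\gamma))$ since only the first visit to each pair carries a nonzero $\alpha_{n_t}^0$. The bonus term $\sum_t w_t \beta_{n_t}$ is controlled via the fact that $\beta_t$ is decreasing and convex in $t$: the rearrangement inequality concentrates the largest $w_t$ on the smallest visit counts, after which Cauchy--Schwarz against $\sum_t w_t \le C$ and $w_t \le w$ gives an $O(\sqrt{wSAHC\iota(C)}/(1-\gamma))$ bound. The propagation term $\gamma \sum_t w_t \sum_{i=1}^{n_t}\alpha_{n_t}^i(\hat V_{\tau_i}-V^*)(s_{\tau_i+1})$ is the crux. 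After re-indexing by the time $t'$ at which the $i$-th visit of $(s_t,a_t)$ occurred and using $\hat V_{t'+1}(s_{t'+1}) \le \hat Q_{t'+1}(s_{t'+1},a_{t'+1})$ by greediness, it becomes a new weighted sum $\gamma \sum_{t'} w'_{t'+1}(\hat Q_{t'+1} - Q^*)(s_{t'+1}, a_{t'+1})$. The key identity from~\cite{jin2018q} --- namely $\sum_{t\ge i}\alpha_t^i \le 1 + 1/H$ together with a per-coefficient bound --- implies that $w'$ is a $(C', w')$-sequence with $C' \le (1+1/H)C$ and $w' \le (1+1/H)w$.

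The bound has thus reproduced itself with an extra $\gamma$ factor and weight parameters enlarged by $1 + 1/H$. Iterating $R = H$ rounds, the parameters stay within a constant factor since $(1+1/H)^H = O(1)$; the aggregated bonus contributions total $H \cdot O(\sqrt{wSAHC\iota(C)}/(1-\gamma)) = O(\sqrt{wSAC\ell(C)}/(1-\gamma)^{2.5})$ after absorbing the $H^{3/2}$ factor into $\ell(C)$ via $H = \tilde{O}(1/(1-\gamma))$; the initialization contributions aggregate to $O(wSAH\ln C/(1-\gamma)) = O(wSA\ln C \cdot \ln(1/((1-\gamma)\epsilon_1))/(1-\gamma)^3)$ as stated; and the residual after $R$ rounds is at most $\gamma^R \cdot C/(1-\gamma) \le \epsilon_1 C/(1-\gamma)$ since $\hat Q - Q^* \le 1/(1-\gamma)$. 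The main obstacle will be the re-indexing step: verifying carefully that after swapping the order of summation, the new weight sequence genuinely satisfies both the total-mass bound $C' \le (1+1/H)C$ and the per-entry bound $w' \le (1+1/H)w$ uniformly. This hinges on the very specific algebraic structure of $\alpha_k = (H+1)/(H+k)$, and is precisely where the tight $H$ dependence that distinguishes this analysis from the naive $\alpha_k = 1/k$ approach comes from.
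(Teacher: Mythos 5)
Your proposal follows the paper's proof essentially step for step: the same pointwise recursion $0\le(\hat Q_p-Q^*)(s,a)\le \alpha_t^0/(1-\gamma)+\beta_t+\gamma\sum_i\alpha_t^i(\hat V_{t_i}-V^*)(s_{t_i+1})$ with the $\delta/[2SA(k+1)(k+2)]$ union-bound allocation, the same three-way decomposition of the weighted sum, the same rearrangement-plus-concavity bound on the bonus term, the same $(C,(1+1/H)w)$-sequence bookkeeping for the re-indexed weights, and the same $H$-fold unrolling with residual $\gamma^H C/(1-\gamma)\le C\epsilon_1$.

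The one step that does not go through as written is the passage from the re-indexed propagation term to a weighted sum of $(\hat Q_{t'+1}-Q^*)(s_{t'+1},a_{t'+1})$. The quantity appearing after re-indexing is $(\hat V_{t'}-V^*)(s_{t'+1})$, not $(\hat V_{t'+1}-V^*)(s_{t'+1})$, and since $\hat Q$ (hence $\hat V$) is non-increasing in time you have $\hat V_{t'}(s_{t'+1})\ge \hat V_{t'+1}(s_{t'+1})$, so substituting the later value goes the wrong way for an upper bound. The paper inserts the correction $\sum_{t'}w'_{t'+1}\bigl(\hat V_{t'}-\hat V_{t'+1}\bigr)(s_{t'+1})\le \sum_{t'}w'_{t'+1}\cdot 2\alpha_{n_{t'}+1}/(1-\gamma)=\mathcal{O}\bigl(wSAH\ln C/(1-\gamma)\bigr)$ in each unrolling round; this, and not the initialization term (which contributes only $\mathcal{O}(wSA/(1-\gamma))$ per round, with no $\ln C$), is the actual source of the $wSA\ln C$ factor in the final bound. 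Once this correction is added, your accounting matches the paper's and the argument closes.
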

\begin{fact}
(1) The following statement holds throughout the algorithm, $$\hat{Q}_{p+1}(s,a)\leq Q_{p+1}(s,a).$$
(2) For any $p$, there exists $p'\le p$ such that
$$\hat{Q}_{p+1}(s,a)\geq Q_{p'+1}(s,a).$$
\end{fact}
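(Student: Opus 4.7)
The plan is to prove both parts of Fact 1 by a simultaneous induction on the time index $p$, reading off directly from the update rules in Algorithm 1. The crucial observation is that the only operations that ever modify $\hat{Q}(s,a)$ are the initialization $\hat{Q}(s,a)\leftarrow 1/(1-\gamma)$ and, at each step $t$, the assignment $\hat{Q}(s_t,a_t)\leftarrow \min(\hat{Q}(s_t,a_t),Q(s_t,a_t))$; likewise $Q(s,a)$ is modified only when $(s_t,a_t)=(s,a)$. Consequently, analyzing $\hat{Q}_{p+1}(s,a)$ and $Q_{p+1}(s,a)$ reduces to tracking what happens at step $p$ and invoking the inductive hypothesis at $p-1$.

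For part (1), I would argue by induction on $p$. The base case $p=0$ is immediate: both quantities are initialized to $1/(1-\gamma)$, so $\hat{Q}_1(s,a)=Q_1(s,a)$. For the inductive step, fix $(s,a)$ and split on whether $(s_p,a_p)=(s,a)$. If it does, then $Q_{p+1}(s,a)$ is computed by the convex-combination update and immediately afterward the algorithm sets $\hat{Q}_{p+1}(s,a)=\min(\hat{Q}_p(s,a),Q_{p+1}(s,a))\le Q_{p+1}(s,a)$ by definition of $\min$. If it does not, then neither $\hat{Q}(s,a)$ nor $Q(s,a)$ is touched at step $p$, so $\hat{Q}_{p+1}(s,a)=\hat{Q}_p(s,a)\le Q_p(s,a)=Q_{p+1}(s,a)$, where the inequality comes from the inductive hypothesis.

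For part (2), I would prove the slightly stronger statement that $\hat{Q}_{p+1}(s,a)$ is actually \emph{equal} to $Q_{p'+1}(s,a)$ for some $p'\le p$, which immediately yields the stated $\ge$ inequality. This is natural: the $\min$ operation can only overwrite $\hat{Q}$ with a value that has just been produced by the $Q$-update, and the initial value $1/(1-\gamma)$ coincides with $Q_1(s,a)$. Formally, do induction on $p$. The base case $p=0$ holds with $p'=0$. For the step, fix $(s,a)$. If $(s_p,a_p)\neq (s,a)$, then both $\hat{Q}$ and $Q$ at $(s,a)$ are unchanged at step $p$; the inductive hypothesis supplies a $p''\le p-1$ with $\hat{Q}_p(s,a)=Q_{p''+1}(s,a)$, and since $Q_{p''+1}(s,a)$ remains unchanged we may take $p'=p''$. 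If $(s_p,a_p)=(s,a)$, then after the $\min$ step $\hat{Q}_{p+1}(s,a)$ equals either $\hat{Q}_p(s,a)$ or the freshly computed $Q_{p+1}(s,a)$: in the former subcase we again invoke the inductive hypothesis at $p-1$, and in the latter we take $p'=p$ directly.

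No step here is technically difficult; the entire argument is a careful bookkeeping of the update rules together with two-case analyses. The only things worth double-checking are that the inductive hypothesis in part (2) is always invoked at a strictly earlier step (so the witness $p'$ respects the required bound $p'\le p$), and that the $\min$ in part (1) does not interfere with the no-update case (it does not, because the $\min$ only fires when $(s_t,a_t)=(s,a)$). Both are immediate from the framing above.
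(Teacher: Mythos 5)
Your proof is correct and takes essentially the same route as the paper, which simply asserts that both properties follow from the $\min$ update rule (line 11 of Algorithm 1) together with the initialization; your induction with the two-case split on whether $(s_p,a_p)=(s,a)$ is just the careful formalization of that one-line observation. The strengthening in part (2) to an equality $\hat{Q}_{p+1}(s,a)=Q_{p'+1}(s,a)$ is also implicit in the paper's reasoning and is a valid way to organize the bookkeeping.
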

\begin{proof}
Both properties are results of the update rule at line 11 of Algorithm 1.
\end{proof}

Before proving lemma 2, we will prove two auxiliary lemmas.
\begin{customthm}{3}
\label{lem:alpha}
The following properties hold for $\alpha_t^i:$
\begin{enumerate}
    \item $\sqrt{\frac{1}{t}}\le \sum_{i=1}^t\alpha_t^i\sqrt{\frac{1}{i}}\le 2\sqrt{\frac{1}{t}}$ for every $t\ge 1, c>0.$
    \item $\max_{i\in[t]}\alpha_t^i\le \frac{2H}{t}$ and $\sum_{i=1}^t(\alpha_t^i)^2\le \frac{2H}{t}$ for every $t\ge 1$.
    \item $\sum_{t=i}^{\infty}\alpha_t^i=1+1/H,$ for every $i\ge 1$.
    \item $\sqrt{\frac{\iota(t)}{t}}\le \sum_{i=1}^t\alpha_t^i\sqrt{\frac{\iota(i)}{i}}\le 2\sqrt{\frac{\iota(t)}{t}}$ where $\iota(t)=\ln(c(t+1)(t+2)),$ for every $t\ge 1, c\ge 1.$
\end{enumerate}
\end{customthm}
\begin{proof}
Recall that $$\alpha_t=\frac{H+1}{H+t},\quad  \alpha_t^0=\prod_{j=1}^t(1-\alpha_j),\quad \alpha_t^i=\alpha_i\prod_{j=i+1}^{t}(1-\alpha_j).$$ Properties 1-3 are proven by~\cite{jin2018q}. Now we prove the last property.

On the one hand, $$\sum_{i=1}^t\alpha_t^i\sqrt{\frac{\iota(i)}{i}}\le\sum_{i=1}^t\alpha_t^i\sqrt{\frac{\iota(t)}{i}}\le2\sqrt{\frac{\iota(t)}{t}},$$ where the last inequality follows from property 1.

The left-hand side is proven by induction on $t$. For the base case, when $t=1,\alpha_t^t=1$. For $t\ge 2$, we have $\alpha_t^i=(1-\alpha_t)\alpha_{t-1}^i$ for $1\le i\le t-1.$ It follows that
$$\sum_{i=1}^t\alpha_t^i\sqrt{\frac{\iota(i)}{i}}=\alpha_t\sqrt{\frac{\iota(t)}{t}}+(1-\alpha_t)\sum_{i=1}^{t-1}\alpha_{t-1}^i\sqrt{\frac{\iota(i)}{i}}\ge \alpha_t\sqrt{\frac{\iota(t)}{t}}+(1-\alpha_t)\sqrt{\frac{\iota(t-1)}{t-1}}.$$
Since function $f(t)=\iota(t)/t$ is monotonically decreasing for $t\ge 1,c\ge 1$, we have
$$ \alpha_t\sqrt{\frac{\iota(t)}{t}}+(1-\alpha_t)\sqrt{\frac{\iota(t-1)}{t-1}}\ge  \alpha_t\sqrt{\frac{\iota(t)}{t}}+(1-\alpha_t)\sqrt{\frac{\iota(t)}{t}}\ge \sqrt{\frac{\iota(t)}{t}}.$$
\end{proof}

\begin{customthm}{4}
With probability at least $1-\delta/2$, for all $p\ge 0$ and $(s,a)$-pair, 
\begin{align}
\label{equ:JC4.3}
    &0\le (Q_{p}-Q^*)(s,a)\le \frac{\alpha_t^0}{1-\gamma}+\sum_{i=1}^t \gamma \alpha_t^i(\hat{V}_{t_i}-V^*)(s_{t_i+1})+\beta_t,\\
\label{equ:qhatopt}
    &0\le (\hat{Q}_{p}-Q^*)(s,a),
\end{align}
where $t=N_p(s,a),t_i=\tau(s,a,i)$ and $\beta_t=c_3\sqrt{H\iota(t)/((1-\gamma)^2t)}.$
\end{customthm}

\begin{proof}
Recall that $$\alpha_t^0=\prod_{j=1}^t(1-\alpha_j),\quad \alpha_t^i=\alpha_i\prod_{j=i+1}^{t}(1-\alpha_j).$$
From the update rule, it can be seen that our algorithm maintains the following $Q(s,a)$:
	\begin{align*}
		{Q}_{p}(s,a) = \alpha_t^0\frac{1}{1-\gamma}+\sum_{i=1}^{t}\alpha_t^i\left[r(s,a)+b_i+\gamma \hat{V}_{t_i}(s_{t_{i}+1})\right].
	\end{align*}
	Bellman optimality equation gives:
	\begin{align*}
	Q^*(s,a) &= r(s,a) + \gamma \mathbb{P}V^*(s,a)=\alpha_t^0Q^*(s,a)+\sum_{i=1}^{t}\alpha_t^i\left[r(s,a)+\gamma \mathbb{P} V^*(s,a)\right].
	\end{align*}
	Subtracting the two equations gives
	\begin{align*}
	({Q}_{p}-Q^*)(s,a) = \alpha_t^0(\frac{1}{1-\gamma}-Q^*(s,a)) + \sum_{i=1}^{t}\alpha_t^i
	\left[b_i+\gamma\left(V_{t_i}-V^*\right)(s_{t_i+1})+\gamma\left(V^*(s_{t_i+1})-\mathbb{P}V^*(s,a)\right)\right].
	\end{align*}
The identity above holds for arbitrary $p$, $s$ and $a$. Now fix $s\in S$, $a\in A$ and $p\in \mathbb{N}$. Let $t=N_p(s,a)$, $t_i=\tau(s,a,i)$. The $t=0$ case is trivial; we assume $t\geq 1$ below. Now consider an arbitrary fixed $k$. Define
\begin{align*}
\Delta_i = \left(\alpha_k^i\cdot I[t_i<\infty]\cdot \left(\mathbb{P}V^*-\hat{\mathbb{P}}_{t_i}V^*\right)(s,a)\right)
\end{align*}
Let $F_i$ be the $\sigma$-Field generated by random variables $(s_1,a_1,...,s_{t_i},a_{t_i})$. 
It can be seen that $\mathbb{E}\left[\Delta_i|F_i\right]=0$, while $\Delta_i$ is measurable in $F_{i+1}$. Also, since $0\leq V^*(s,a)\leq \frac{1}{1-\gamma}$, $\left|\Delta_i\right|\leq \frac{2}{1-\gamma}$. Therefore, $\Delta_i$ is a martingale difference sequence; by the Azuma-Hoeffding inequality, 
\begin{align}
\label{equ:azuma1}
    \Pr\left[\left|\sum_{i=1}^{k}\Delta_i\right|>\eta\right]\leq 2\exp\left\{-\frac{\eta^2}{8 \left(1-\gamma\right)^{-2}\sum_{i=1}^{k}(\alpha_k^i)^2}\right\}.
\end{align}
By choosing $\eta$, we can show that with probability $1-\delta/\left[SA(k+1)(k+2)\right]$,

\begin{align}
\label{equ:azuma3}
\left|\sum_{i=1}^{k}\Delta_i\right|\leq \frac{2\sqrt{2}}{1-\gamma}\cdot \sqrt{\sum_{i=1}^{k}(\alpha_k^i)^2\cdot \ln\frac{2(k+1)(k+2)SA}{\delta}}\leq \frac{c_2}{1-\gamma}\sqrt{\frac{H\iota(k)}{k}}.
\end{align}
Here $c_2=4\sqrt{2}$, $\iota(k)=\ln\frac{(k+1)(k+2)SA}{\delta}$. By a union bound for all $k$, this holds for arbitrary $k>0$, arbitrary $s\in S$, $a\in A$ simultaneously with probability $$1-\sum_{s'\in S, a'\in A}\sum_{k=1}^{\infty}\frac{\delta}{2SA(k+1)(k+2)}=1-\frac{\delta}{2}.$$ Therefore, we conclude that (\ref{equ:azuma3}) holds for the random variable $t=N_p(s,a)$ and for all $p$, with probability $1-\delta/2$ as well.

\textbf{Proof of the right hand side of (\ref{equ:JC4.3}):} We also know that ($b_k=\frac{c_2}{1-\gamma}\sqrt{\frac{H\iota(k)}{k}}$)
$$\frac{c_2}{1-\gamma}\sqrt{\frac{H\iota(k)}{k}}\leq \sum_{i=1}^k\alpha_k^i b_i\leq \frac{2c_2}{1-\gamma}\sqrt{\frac{H\iota(k)}{k}}.$$
It is implied by (\ref{equ:azuma3}) that
\begin{align*}
(Q_{p}-Q^*)(s,a)&\leq \frac{\alpha_t^0}{1-\gamma}+\gamma\left|\sum_{i=1}^{t}\Delta_i\right|+\sum_{i=1}^{t}\alpha_{t}^i\left[\gamma(\hat{V}_{t_i}-V^*)(x_{t_i+1})+b_i\right]\\
&\leq \frac{\alpha_t^0}{1-\gamma}+\frac{3c_2}{1-\gamma}\sqrt{\frac{H\iota(t)}{t}}+\sum_{i=1}^t \gamma \alpha_t^i(\hat{V}^{t_i}-V^*)(x_{t_i+1})\tag{Property 4 of lemma 3}\\
&\leq \frac{\alpha_t^0}{1-\gamma}+\sum_{i=1}^t \gamma \alpha_t^i(\hat{V}^{t_i}-V^*)(x_{t_i+1})+\beta_t.
\end{align*}
Note that $\beta_t=c_3(1-\gamma)^{-1}\sqrt{H\iota(t)/t}$; $c_3=3c_2=12\sqrt{2}$.

\textbf{Proof of the left hand side of (\ref{equ:JC4.3}):} 
Now, we assume that event that (\ref{equ:azuma3}) holds. We assert that $Q_{p}\geq Q^*$ for all $(s,a)$ and $p\leq p'$. This assertion is obviously true when $p'=0$. Then
\begin{align*}
(Q_{p}-Q^*)(s,a)&\geq -\gamma\left|\sum_{i=1}^{t}\Delta_i\right|+\sum_{i=1}^{t}\alpha_{t}^i\left[\gamma(\hat{V}_{t_i}-V^*)(x_{t_i+1})+b_i\right]\\
&\geq \sum_{i=1}^{t}\alpha_{t}^ib_i-\gamma\left|\sum_{i=1}^{t}\Delta_i\right|\geq 0.
\end{align*}
Therefore the assertion holds for $p'+1$ as well. By induction, it holds for all $p$.

We now see that (\ref{equ:JC4.3}) holds for probability $1-\delta/2$ for all $p$, $s$, $a$. Since $\hat{Q}_p(s,a)$ is always greater than $Q_{p'}(s,a)$ for some $p'\leq p$, we know that $\hat{Q}_{p}(s,a)\geq Q_{p'}(s,a)\geq Q^*(s,a)$, thus proving (\ref{equ:qhatopt}).

\end{proof}

We now give a proof for lemma 2. Recall the definition for a $(C,w)$-sequence. A sequence $(w_t)_{t\ge1}$ is said to be a $(C,w)$-sequence for $C, w >0$, if $0\le w_t\le w$ for all $t\ge 1$, and $\sum_{t \ge 1} w_t \le C$.

\begin{proof}
Let $n_t=N_t(s_t,a_t)$ for simplicity; we have
\begin{align}
&\sum_{t\ge 1}w_t(\hat{Q}_t-Q^*)(s_t,a_t)\nonumber\\
\le &\sum_{t\ge 1}w_t(Q_t-Q^*)(s_t,a_t)\nonumber\\
\le &\sum_{t\ge 1}w_t\left[\frac{\alpha_{n_t}^0}{1-\gamma}+\beta_{n_t}+\gamma \sum_{i=1}^{n_t}\alpha_{n_t}^{i}\left(\hat{V}_{\tau(s_t,a_t,i)}-V^*\right)(s_{\tau(s_t,a_t,i)+1})\right] \label{lem4:weightsum}
\end{align}
The last inequality is due to lemma 4. Note that $\alpha_{n_t}^0=\mathbb{I}[n_t=0]$, the first term in the summation can be bounded by,
\begin{align}
\label{lem4:term1}
\sum_{t\ge 1}w_t\frac{\alpha_{n_t}^0}{1-\gamma}\le \frac{SAw}{1-\gamma}.
\end{align}

For the second term, define $u(s,a)=\sup_{t}N_t(s,a).\footnote{$u(s,a)$ could be infinity when $(s,a)$ is visited for infinite number of times.}$ It follows that,
\begin{align}
\sum_{t\ge 1}w_t\beta_{n_t}&=\sum_{s,a}\sum_{i=1}^{u(s,a)}w_{\tau(s,a,i)}\beta_i \nonumber\\
&\le \sum_{s,a}(1-\gamma)^{-1}c_3\sum_{i=1}^{C_{s,a}/w}\sqrt{\frac{H\iota(i)}{i}}w\label{lem4:line1}\\
&\le 2\sum_{s,a}(1-\gamma)^{-1}c_3\sqrt{\iota(C)HC_{s,a}w}\label{lem4:line2}\\
&\le 2c_3(1-\gamma)^{-1}\sqrt{wSAHC\iota(C)}\label{lem4:line3}.
\end{align}
Where $C_{s,a}=\sum_{t\ge 1,(s_t,a_t)=(s,a)}w_t.$ Inequality ($\ref{lem4:line1}$) follows from rearrangement inequality, since $\iota(x)/x$ is monotonically decreasing. Inequality ($\ref{lem4:line3}$) follows from Jensen's inequality. 

For the third term of the summation, we have
\begin{align}
&\sum_{t\ge 1}w_t\sum_{i=1}^{n_t}\alpha_{n_t}^i\left(\hat{V}_{\tau(s_t,a_t,i)}-V^*\right)(s_{\tau(s_t,a_t,i)+1})\nonumber\\
\le &\sum_{t'\ge 1}\left(\hat{V}_{t'}-V^*\right)(s_{t'+1})\left(\sum_{\substack{t=t'+1\\(s_t,a_t)=(s_t',a_t')}}^{\infty}\alpha_{n_t}^{n_{t'}}w_t\right).\label{lem4:term3}\\
\end{align}
Define $$w'_{t'+1}=\left(\sum_{\substack{t=t'+1\\(s_t,a_t)=(s_t',a_t')}}^{\infty}\alpha_{n_t}^{n_{t'}}w_t\right).$$ 

We claim that $w'_{t+1}$ is a $(C,(1+\frac{1}{H})w)$-sequence. 
We now prove this claim. By lemma \ref{lem:alpha}, for any $t'\ge 0$, $$w'_{t'+1}\le w\sum_{j=n_{t'}}^{\infty}\alpha_{j}^{n_{t'}}=(1+1/H)w.$$ By $\sum_{j=0}^{i}\alpha_{i}^{j}=1$, we have $\sum_{t'\ge 1}w'_{t'+1}\le \sum_{t\ge 1}w_{t}\le C.$ This proves the assertion. It follows from (\ref{lem4:term3}) that 
\begin{align}
&\sum_{t\ge 1}w'_{t+1}\left(\hat{V}_{t}-V^*\right)(s_{t+1})\nonumber\\
= & \sum_{t\ge 1}w'_{t+1}\left(\hat{V}_{t+1}-V^*\right)(s_{t+1})+\sum_{t\ge 1}w'_{t+1}\left(\hat{V}_{t}-\hat{V}_{t+1}\right)(s_{t+1})\\
\le & \sum_{t\ge 1}w'_{t+1}\left(\hat{V}_{t+1}-V^*\right)(s_{t+1})+\sum_{t\ge 1}w'_{t+1}\left(2\alpha_{n_t+1}\frac{1}{1-\gamma}\right) \label{lem4:line4}\\ 
\le & \sum_{t\ge 1}w'_{t+1}\left(\hat{V}_{t+1}-V^*\right)(s_{t+1})+\mathcal{O}\left(\frac{wSAH}{1-\gamma}\ln C\right) \label{lem4:line5} \\ 
\le & \sum_{t\ge 1}w'_{t+1}\left(\hat{Q}_{t+1}-Q^*\right)(s_{t+1},a_{t+1})+\mathcal{O}\left(\frac{wSAH}{1-\gamma}\ln C\right) \label{lem4:term3last}
\end{align}
Inequality (\ref{lem4:line4}) comes from the update rule of our algorithm. Inequality (\ref{lem4:line5}) comes from the fact that $\alpha_t= (H+1)/(H+t)\leq H/t$ and Jensen's Inequality. More specifically, let $C'_{s,a}=\sum_{t\geq 1,(s_t,a_t=s,a}w'_{t+1}$, $w'=w(1+1/H)$. Then
\begin{align*}
\sum_{t\geq 1}w'_{t+1}\alpha_{n_t+1} \leq \sum_{s,a} \sum_{n=1}^{C'_{s,a}/w'}w'\frac{H}{n} \leq \sum_{s,a}Hw'\ln (C'_{s,a}/w) \leq 2SAHw\ln C.
\end{align*}

Putting (\ref{lem4:term1}), (\ref{lem4:line3}) and (\ref{lem4:term3last}) together, we have,

\begin{align}
&\sum_{t\ge 1}w_t(\hat{Q}_t-Q^*)(s_t,a_t)\nonumber\\
&\le 2c_3\frac{\sqrt{wSAHC\iota(C)}}{1-\gamma}+\mathcal{O}\left(\frac{wSAH}{1-\gamma}\ln C\right)+\gamma\sum_{t\ge 1}w'_{t+1}\left(\hat{Q}_{t+1}-Q^*\right)(s_{t+1},a_{t+1}).
\end{align}
Observe that the third term is another weighted sum with the same form as (\ref{lem4:weightsum}). Therefore, we can unroll this term repetitively with changing weight sequences.
 Suppose that our original weight sequence is also denoted by $\{w_t^{(0)}\}_{t\geq 1}$, while $\{w_t^{(k)}\}_{t\geq 1}$ denotes the weight sequence after unrolling for $k$ times. Let $w^{(k)}$ be $w\cdot \left(1+1/H\right)^k$. Then we can see that $\{w_t^{(k)}\}_{t\geq 1}$ is a $(C,w^{(k)})$-sequence. Suppose that we unroll for $H$ times. Then
\begin{align*}
&\sum_{t\ge 1}w_t(\hat{Q}_t-Q^*)(s_t,a_t)\\
&\le 2c_3\frac{\sqrt{w^{(H)}SAHC\iota(C)}}{(1-\gamma)^2}+\mathcal{O}\left(\frac{w^{(H)}SAH}{(1-\gamma)^2}\ln C\right)+\gamma^H\sum_{t\ge 1}w^{(H)}_{t}\left(\hat{Q}_{t}-Q^*\right)(s_{t},a_{t})\\
&\le 2c_3\frac{\sqrt{w^{(H)}SAHC\iota(C)}}{(1-\gamma)^2}+\mathcal{O}\left(\frac{w^{(H)}SAH}{(1-\gamma)^2}\ln C\right)+\gamma^H\frac{C}{1-\gamma}.\\
\end{align*}
We set $H=\frac{\ln 1/((1-\gamma)\epsilon_1)}{\ln 1/\gamma}\le \frac{\ln 1/((1-\gamma)\epsilon_1)}{1-\gamma}.$ It follows that $w^{(H)}= (1+1/H)^Hw^{(0)}\le ew^{(0)},$ and that $\gamma^H\frac{C}{1-\gamma}\le C\epsilon_1.$ Also, let $\ell(C)=\iota(C)\ln((1-\gamma)^{-1}\epsilon_1^{-1})$. Therefore,

\begin{equation}
\sum_{t\ge 1}w_t(\hat{Q}_t-Q^*)(s_t,a_t)\le \frac{C\epsilon_1}{1-\gamma}+\mathcal{O}\left(\frac{\sqrt{wSAC\ell(C)}}{(1-\gamma)^{2.5}}+\frac{wSA}{(1-\gamma)^3}\ln C\ln \frac{1}{(1-\gamma)\epsilon_1}\right).\end{equation}
\end{proof}






\section{Extension to other settings}
First we define a mapping from a finite horizon MDP to an infinite horizon MDP so that our algorithm can be applied. For an arbitrary finite horizon MDP $\mathcal{M}=(S,A,H,r_h(s,a),p_h(s'\mid s,a))$ where $H$ is the length of episode, the corresponding infinite horizon MDP $\bar{\mathcal{M}}=(\bar{S},\bar{A},\gamma, \bar{r}(\bar{s},\bar{a}),\bar{p}(\bar{s}'\mid \bar{s},\bar{a}))$ is defined as,
 \begin{itemize}
     \item $\bar{S}=S\times H, \bar{A}=A$;
    \item $\gamma=(1-1/H)$;
    \item for a state $s$ at step $h$, let $\bar{s}_{s,h}$ be the corresponding state. For any action $a$ and next state $s'$, define $\bar{r}(\bar{s}_{s,h},a)=\gamma^{H-h+1}r_h(s,a)$ and $\bar{p}(\bar{s}_{s',h+1}\mid \bar{s}_{s,h},a)=p_h(s'\mid s,h).$ And for $h=H$, set $\bar{r}(\bar{s}_{s,h},a)=0$ and $\bar{p}(\bar{s}_{s',1}\mid \bar{s}_{s,h},a)=I[s'=s_1]$ for a fixed starting state $s_1$.
\end{itemize}
Let $\bar{V}_t$ be the value function in $\bar{\mathcal{M}}$ at time $t$ and $V_h^k$ the value function in $\mathcal{M}$ at episode $k$, step $h$. It follows that $\bar{V}^*(\bar{s}_{s_1,1})=\frac{\gamma^H}{1-\gamma^H}V_1^*(s_1).$ And the policy mapping is defined as $\pi_h(s)=\bar{\pi}(\bar{s}_{s,h})$ for policy $\bar{\pi}$ in $\bar{\mathcal{M}}$. Value functions in MDP $\mathcal{M}$ and $\bar{\mathcal{M}}$ are closely related in a sense that, any $\epsilon$-optimal policy $\bar{\pi}$ of $\bar{\mathcal{M}}$ corresponding to an $(\epsilon/\gamma^{H})$-optimal policy $\pi$ in $\mathcal{M}$ (see section \ref{sec:mdp_value_function} for proof). Note that here $\gamma^H=(1-1/H)^H=\mathcal{O}(1)$ is a constant.

For any $\epsilon>0$, by running our algorithm on $\bar{M}$ for $\tilde{\mathcal{O}}(\frac{3SAH^9}{\epsilon^2})$ time steps, the starting state $s_1$ is visited at least $\tilde{\mathcal{O}}(\frac{3SAH^8}{\epsilon^2})$ times, and at most $1/3$ of them are not $\epsilon$-optimal. If we select the policy uniformly randomly from the policy $\pi^{tH+1}$ for $0\le t< T/H$, with probability at least $2/3$ we can get an $\epsilon$-optimal policy. Therefore the PAC sample complexity is $\tilde{\mathcal{O}}\left(\epsilon^{-2}\right)$ after hiding $S,A,H$ terms.

On the other hand, we want to show that for any $K$ episodes, $$\text{Regret}(T)=\sum_{k=1}^{T/H}\left[V^*(s_1)-V_1^k(s_1)\right]\propto T^{1/2}.$$ 

The reason why our algorithm can have a better reduction from regret to PAC is that, after choosing $\epsilon_1$, it follows from the argument of theorem 1 that for all $\epsilon_2>\tilde{\mathcal{O}}(\epsilon_1/(1-\gamma))$, the number of $\epsilon_2$-suboptimal steps is bounded by 
$$\mathcal{O}\left(\frac{SA\ln SA \ln 1/\delta}{\epsilon_2^2(1-\gamma)^7}\text{polylog}\left(\frac{1}{\epsilon_1}, \frac{1}{1-\gamma} \right)\right)$$ with probability $1-\delta.$ In contrast, delayed Q-learning can only give an upper bound on $\epsilon_1$-suboptimal steps after setting parameter $\epsilon_1$.

Formally, let $X_k=V^*(s_1)-V^{k}_1(s_1)$ be the regret of $k$-th episode. For any $T$, set $\epsilon=\sqrt{SA/T}$ and $\epsilon_2=\tilde{\mathcal{O}}(\epsilon_1/(1-\gamma)).$ Let $M=\lceil\log_2 \frac{1}{\epsilon_2(1-\gamma)}\rceil$. It follows that,
\begin{align*}
	\text{Regret}(T)&\le T\epsilon_2+\sum_{i=1}^{M}\left(\left|k:\{X_k\ge \epsilon_2\cdot 2^{i-1}\}\right|\right)\epsilon_2\cdot 2^{i}\\
	&\le \tilde{\mathcal{O}}\left(T\epsilon_2+\sum_{i=1}^{M}\frac{SA\ln 1/\delta}{\epsilon_2\cdot 2^{i-2}}\right)\\
	&\le \tilde{\mathcal{O}}\left(\sqrt{SAT}\ln 1/\delta\right)
\end{align*} with probability $1-\delta$. Note that the $\tilde{\mathcal{O}}$ notation hides the $\text{poly}\left(1/(1-\gamma), \log 1/\epsilon_1\right)$ which is, by our reduction, $\text{poly}\left(H,\log T,\log S,\log A\right)$.

\subsection{Connection between value functions}
\label{sec:mdp_value_function}
Recall that our MDP mapping from $\mathcal{M}=(S,A,H,r_h(s,a),p_h(s'\mid s,a))$ to $\bar{\mathcal{M}}=(\bar{S},\bar{A},\gamma, \bar{r}(\bar{s},\bar{a}),\bar{p}(\bar{s}'\mid \bar{s},\bar{a}))$ is defined as,
\begin{itemize}
    \item $\bar{S}=S\times H, \bar{A}=A$;
    \item $\gamma=(1-1/H)$;
    \item for a state $s$ at step $h$, let $\bar{s}_{s,h}$ be the corresponding state. For any action $a$ and next state $s'$, define $\bar{r}(\bar{s}_{s,h},a)=\gamma^{H-h+1}r_h(s,a)$ and $\bar{p}(\bar{s}_{s',h+1}\mid \bar{s}_{s,h},a)=p_h(s,h).$ And for $h=H$, set $\bar{r}(\bar{s}_{s,h},a)=0$ and $\bar{p}(\bar{s}_{s',1}\mid \bar{s}_{s,h},a)=I[s'=s_1]$ for a fixed starting state $s_1$.
\end{itemize}

For a trajectory $\{(\bar{s}_{s_1,1},\bar{a}_1),(\bar{s}_{s_2,2},\bar{a}_2),\cdots\}$ in $\bar{\mathcal{M}}$, let $\{(s_1,a_1),(s_2,a_2),\cdots\}$ be the corresponding trajectory in $\mathcal{M}.$ Note that $\mathcal{M}$ has a unique fixed starting state $s_1$, which means that $s_{tH+1}=s_1$ for all $t\ge 0$. Denote the corresponding policy of $\bar{\pi}^t$ as $\pi^t$ (may be non-stationary), then we have
\begin{align*} 
    \bar{V}^{\bar{\pi}^t}(\bar{s}_{s_1,1})&=\E\left[\bar{r}(\bar{s}_{s_1,1},\bar{a}_1)+\gamma \bar{r}(\bar{s}_{s_2,2},\bar{a}_2)+\cdots+\gamma^{H-1}\bar{r}(\bar{s}_{s_{H-1},H-1},\bar{a}_{H-1})+\gamma^H\bar{V}^{\pi_{t+H-1}}(\bar{s}_{s_{H+1},1})\right] \\
    &=\gamma^H \E\left[r_1(s_1,a_1)+ r_2(s_2,a_2)+\cdots+r_{H-1}(s_{H-1},a_{H-1})+\bar{V}^{\pi_{t+H}}(\bar{s}_{s_{H+1},1})\right]\\
    &=\gamma^H V^{\pi^t}(s_1)+\gamma^H \bar{V}^{\pi_{t+H}}(\bar{s}_{s_1,1}).
\end{align*}
Then for a stationary policy $\bar{\pi}$, we can conclude $\bar{V}^{\bar{\pi}}(\bar{s}_{s_1,1})=\frac{\gamma^H}{1-\gamma^H}V^{\pi}(s_1).$ Since the optimal policy $\bar{\pi}^*$ is stationary, we have $\bar{V}^{*}(\bar{s}_{s_1,1})=\frac{\gamma^H}{1-\gamma^H}V^*(s_1).$

By definition, $\bar{\pi}$ is $\epsilon$-optimal at time step $t$ means that  $$\bar{V}^{\bar{\pi}^t}(\bar{s}_{s_1,1})\ge \bar{V}^{*}(\bar{s}_{s_1,1})-\epsilon.$$ It follows that
\begin{align*}
    \gamma^H V^{\pi^t}(s_1)+\gamma^H \bar{V}^{\pi_{t+H}}(\bar{s}_{s_1,1})= \bar{V}^{\bar{\pi}}(\bar{s}_{s_1,1})\ge \bar{V}^{*}(\bar{s}_{s_1,1})-\epsilon,
\end{align*}
hence $$\gamma^H V^{\pi^t}(s_1)\ge (1-\gamma^H)\bar{V}^{*}(\bar{s}_{s_1,1})+\gamma^H(\bar{V}^{*}(\bar{s}_{s_1,1})-\bar{V}^{\pi_{t+H}}(\bar{s}_{s_1,1}))-\epsilon\ge (1-\gamma^H)\bar{V}^{*}(\bar{s}_{s_1,1})-\epsilon.$$
Therefore we have 
$$ V^{\pi^t}(s_1)\ge \frac{1-\gamma^H}{\gamma^H}\bar{V}^{*}(\bar{s}_{s_1,1})-\epsilon/\gamma^H=V^*(s_1)-\epsilon/\gamma^H,$$ which means that $\pi^t$ is an $(\epsilon/\gamma^H)$-optimal policy.

\section{A hard instance for Delayed Q-learning}
\label{app:hard_instance}
In this section, we prove Theorem~\ref{thm:hard} regarding the performance of Delayed Q-learning.

\setcounter{theorem}{1}
\begin{theorem}
\label{thm:hard_appendix}
There exists a family of MDPs with constant $S$ and $A$, in which with probability $1-\delta$, Delayed Q-learning incurs sample complexity of exploration of $\Omega\left(\frac{\epsilon^{-3}}{\ln(1/\delta)}\right)$, assuming that $\ln(1/\delta)<\epsilon^{-2}$.
\end{theorem}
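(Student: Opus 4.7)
The plan is to exhibit a family of MDPs, indexed by $\epsilon$, with $|\mathcal{S}|,|\mathcal{A}|=O(1)$, and to directly trace the trajectory of Delayed Q-learning on them. The argument rests on two features of Delayed Q-learning: (i) it defers every $Q$-update until $m=\tilde{\Theta}(\ln(1/\delta)/\epsilon^2)$ fresh samples of the relevant $(s,a)$ have been collected, and (ii) it only accepts an attempted update when the empirical Bellman target lies at least $\epsilon_1=\Theta(\epsilon(1-\gamma))$ below the current $\hat{Q}$. By engineering the MDP so that both thresholds are tight---i.e., the Bellman target sits near $\hat{Q}-2\epsilon_1$ throughout learning---one can force $\Omega(1/((1-\gamma)\epsilon_1))$ successful update rounds at the suboptimal pair before the greedy policy stabilizes. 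Multiplying by $m$ samples per round gives the claimed $\tilde{\Omega}(\epsilon^{-3}/\ln(1/\delta))$ lower bound after absorbing $(1-\gamma)$ and logarithmic factors.

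\textbf{Construction.} The MDP will be a two-state cycle $\{s,s'\}$: state $s$ has two actions $a_{\mathrm{good}}$ and $a_{\mathrm{bad}}$, both deterministically transitioning to $s'$, and $s'$ has a single action returning to $s$. The rewards at all three state-action pairs are independent Bernoullis whose means are calibrated so that (a) $a_{\mathrm{bad}}$ is exactly $\epsilon$-suboptimal at $s$, and (b) starting from the optimistic initialization $\hat{Q}\equiv 1/(1-\gamma)$, the mean empirical Bellman target at each of the three pairs coincides with the update threshold $\hat{Q}-2\epsilon_1$. A reward variance of $\Theta(1)$ combined with $m$ samples gives a noise standard deviation of $\Theta(\epsilon_1)$ in the Bellman target, so each attempted update succeeds with constant probability by a standard concentration argument and, conditional on success, decreases $\hat{Q}$ by $\Theta(\epsilon_1)$. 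Since both $\hat{Q}(s,a_{\mathrm{good}})$ and $\hat{Q}(s,a_{\mathrm{bad}})$ drop at similar rates, the greedy policy alternates between the two actions at $s$ as their values leapfrog downward.

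\textbf{Counting and obstacle.} Since each $\hat{Q}$ must drop from $1/(1-\gamma)$ to its true $Q^*$, a distance of $\Theta(1/(1-\gamma))$, and each successful drop is of size $\Theta(\epsilon_1)$, the total number of rounds needed is $\Omega(1/(\epsilon(1-\gamma)^2))$. Roughly half of these rounds are visits to $a_{\mathrm{bad}}$, and by the definition of sample complexity of exploration each such visit (at which the greedy policy is $\epsilon$-suboptimal) contributes $m$ suboptimal time steps; multiplying and combining probabilities via a union bound yields the claimed bound with probability $1-\delta$. The main obstacle is the joint calibration of the three reward means so that the Bellman targets at all three pairs stay near the $2\epsilon_1$-threshold throughout the $\Omega(\epsilon^{-1})$ rounds---this reduces to verifying a coupled linear recurrence in $(\hat{Q}(s,a_{\mathrm{good}}),\hat{Q}(s,a_{\mathrm{bad}}),\hat{Q}(s',a))$ that keeps each coordinate's drop within a narrow $\Theta(\epsilon_1)$ band. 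A secondary difficulty is verifying that Delayed Q-learning's LEARN-flag freezing rule, which disables future attempts at $(s,a)$ after a failed attempt unless some other pair undergoes a successful update in between, does not trigger prematurely; this is handled by observing that the constant-probability success at each of the three pairs interleaves enough successes among them to keep all LEARN flags alive.
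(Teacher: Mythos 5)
There is a genuine gap, and it sits exactly at the point you flag as the ``main obstacle.'' Your counting requires $\Omega\bigl(1/(\epsilon_1(1-\gamma))\bigr)$ \emph{successful} update rounds, each decrementing $\hat{Q}$ by $\Theta(\epsilon_1)$. But in a fixed MDP the empirical Bellman target tracks the shrinking $\hat{Q}$ values: writing the post-update recursion for your two-state cycle, $\hat{Q}^{(k+1)}(s,a)\approx r(s,a)+\gamma\hat{V}^{(k)}(s')+\epsilon_1$, the successive decrements satisfy $D_{k+1}\approx\gamma D_k$ up to noise of size $O(\epsilon_1)$. So the drops do \emph{not} stay in a $\Theta(\epsilon_1)$ band; they decay geometrically, and attempted updates stop clearing the $2\epsilon_1$ threshold after only $O\bigl(\ln(1/\epsilon_1)/(1-\gamma)\bigr)$ successful rounds per pair (the reward noise cannot rescue this: $m$ is chosen precisely so that the empirical target concentrates to within $\epsilon_1$ of its mean, and the $+\epsilon_1$ bonus keeps $\hat{Q}\ge Q^*$ w.h.p., capping the total descent). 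Your construction therefore yields only $\tilde{O}\bigl(m/(1-\gamma)\bigr)=\tilde{O}(\epsilon^{-2})$ suboptimal steps, not $\Omega(\epsilon^{-3})$. The loose bound of $1/((1-\gamma)\epsilon_1)$ successful updates in Strehl et al.'s analysis is an upper bound that no fixed MDP of this type attains. (A secondary symptom: your count would place $\ln(1/\delta)$ in the numerator via $m$, whereas the theorem's dependence is $1/\ln(1/\delta)$, coming from a Chernoff bound on rare transitions rather than from $m$ multiplying a round count.)

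The missing idea is rare-transition amplification rather than many update rounds. The paper's instance has a state $c$ with reward $0$ reachable from the start state $a$ only via the suboptimal action $y$ and only with probability $10\epsilon$; all other rewards are $1$. Because $\hat{Q}$ is optimistically initialized, the Bellman target for $(a,y)$ equals the current $\hat{Q}(a,y)$ until $\max_{\cdot}\hat{Q}(c,\cdot)$ has been decreased, which itself requires $m=\Omega(\epsilon^{-2})$ visits to $c$. Accumulating $m$ visits to a state reached with probability $10\epsilon$ forces, with probability $1-\delta$ (Chernoff), at least $\Omega\bigl(m/(\epsilon\ln(1/\delta))\bigr)=\Omega\bigl(\epsilon^{-3}/\ln(1/\delta)\bigr)$ visits to $a$ at which the agent still greedily picks the $\epsilon$-suboptimal action $y$. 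A single layer of delayed updates, amplified by the $\Theta(1/\epsilon)$ expected waiting time per sample of $c$, is what produces the cubic dependence; no leapfrogging or threshold calibration is needed.
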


\begin{proof}

\begin{figure}[h]
	\centering
		\includegraphics[width=0.3\columnwidth]{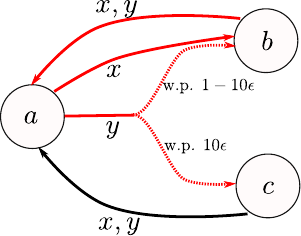}
	
	\caption{The MDP family. Actions are denoted by arrows. Actions with red color have reward 1, and reward 0 otherwise.}
	\label{fig2}
\end{figure}

For each $0<\epsilon<\frac{1}{10}$, consider the following MDP (see also Fig.~\ref{fig2}): state space is $\mathcal{S}=\{a, b, c\}$ while action set is $\mathcal{A}=\{x, y\}$; transition probabilities are ${P}(b | a, y)=1-10\epsilon$, ${P}(c | a, y)=10\epsilon$, ${P}(b | a, x)=1$, ${P}(a | b, \cdot)= P(a | c, \cdot)=1$. Rewards are all $1$, except $R(c, \cdot)=0$.

Assume that Delayed Q-learning is called for this MDP starting from state $a$, with discount $\gamma>\frac{1}{2}$ and precision set as $\epsilon$. Denote the $Q$ value maintained by the algorithm by $\Hat{Q}$. Without loss of generality, assume that the initial tie-breaking favors action $y$ when comparing $\Hat{Q}(a,x)$ and $\Hat{Q}(a,y)$. In that case, unless $\Hat{Q}(a,y)$ is updated, the agent will always choose $y$ in state $a$. Since $Q(a,x)-Q(a,y)=10\epsilon\gamma>\epsilon$ for any policy, choosing $y$ at state $a$ implies that the timestep is not $\epsilon$-optimal. In other words, sample complexity for exploration is at least the number of times the agent visits $a$ before the first update of $\Hat{Q}(a,y)$.

In the Delayed Q-learning algorithm, $\hat{Q}(\cdot,\cdot)$ are initialized to $1/(1-\gamma)$. Therefore, $\hat{Q}(a,y)$ could only be updated if $\max \Hat{Q}(c,\cdot)$ is updated (and becomes smaller than $1/(1-\gamma)$). According to the algorithm, this can only happen if $c$ is visited $m=\Omega\left(\frac{1}{\epsilon^2}\right)$ times.

However, each time the agent visits $a$, there is less than $10\epsilon$ probability of transiting to $c$. Let $t_0={m}/(10\epsilon C)$, where $C=3\ln\frac{1}{\delta}+1$. $\delta$ is chosen such that $C\le m$. In the first $2t_0$ timesteps, $a$ will be visited $t_0$ times. By Chernoff's bound, with probability $1-\delta$, state $c$ will be visited less than $m$ times. In that case, $\Hat{Q}(a,y)$ will not be updated in the first $2t_0$ timesteps. Therefore, with probability $1-\delta$, sample complexity of exploration is at least $$t_0=\Omega\left(\frac{1}{\epsilon^3\left(\ln 1/\delta\right)}\right).$$
When $\ln(1/\delta)<\epsilon^{-2}$, it can be seen that
$C=3\ln\frac{1}{\delta}+1<\frac{4}{\epsilon^2}<m.$
\end{proof}

\end{document}